\documentclass[twoside]{article}

\usepackage[accepted]{aistats2025}
% If your paper is accepted, change the options for the package
% aistats2025 as follows:
%
%\usepackage[accepted]{aistats2025}
%
% This option will print headings for the title of your paper and
% headings for the authors names, plus a copyright note at the end of
% the first column of the first page.

\usepackage[round]{natbib}

\usepackage{microtype}
\usepackage{graphicx}
\usepackage{subfigure}
\usepackage{booktabs} % for professional tables

%\usepackage{hyperref}

% Attempt to make hyperref and algorithmic work together better:

\usepackage{amsmath}
\usepackage{amssymb}
\usepackage{mathtools}
\usepackage{amsthm}
\usepackage{float}
\usepackage{hyperref}

%%%%%%%%%%%%%%%%%%%%%%%%%%%%%%%%
% THEOREMS
%%%%%%%%%%%%%%%%%%%%%%%%%%%%%%%%
\theoremstyle{plain}
\newtheorem{theorem}{Theorem}[section]

\newtheorem{lemma}[theorem]{Lemma}

\theoremstyle{definition}
\newtheorem{definition}[theorem]{Definition}

\theoremstyle{remark}

\begin{document}

% If your paper is accepted and the title of your paper is very long,
% the style will print as headings an error message. Use the following
% command to supply a shorter title of your paper so that it can be
% used as headings.
%
%\runningtitle{I use this title instead because the last one was very long}

% If your paper is accepted and the number of authors is large, the
% style will print as headings an error message. Use the following
% command to supply a shorter version of the authors names so that
% they can be used as headings (for example, use only the surnames)
%
%\runningauthor{Surname 1, Surname 2, Surname 3, ...., Surname n}

\twocolumn[

\aistatstitle{Rethinking Neural-based Matrix Inversion: Why can't, and Where can}

\aistatsauthor{ Yuliang Ji \And Jian Wu \And  Yuanzhe Xi }

\aistatsaddress{ Nanjing University of \\ Science and Technology \\ yuliang.ji@njust.edu.cn \And  Tokyo Institute of Technology \\ wu.j.as@m.titech.ac.jp \And Emory University \\ yuanzhe.xi@emory.edu} 
]

% \aistatsaddress{ Nanjing University of  Science and Technology \And  Tokyo Institute of Technology \And Emory University } ]

% \twocolumn[

% \aistatstitle{Instructions for Paper Submissions to AISTATS 2025}

% \aistatsauthor{ Author 1 \And Author 2 \And  Author 3 }

% \aistatsaddress{ Institution 1 \And  Institution 2 \And Institution 3 } ]

\begin{abstract}
  Deep neural networks have achieved substantial success across various scientific computing tasks. A pivotal challenge within this domain is the rapid and parallel approximation of matrix inverses, critical for numerous applications. Despite significant progress, there currently exists no universal neural-based method for approximating matrix inversion. This paper presents a theoretical analysis demonstrating the fundamental limitations of neural networks in developing a general matrix inversion model. We expand the class of Lipschitz functions to encompass a wider array of neural network models, thereby refining our theoretical approach. Moreover, we delineate specific conditions under which neural networks can effectively approximate matrix inverses. Our theoretical results are supported by experimental results from diverse matrix datasets, exploring the efficacy of neural networks in addressing the matrix inversion challenge.
\end{abstract}

\section{Introduction}

In recent years, neural network-based methods have significantly advanced the solution of complex problems in scientific computing. Notably, deep neural networks have been effectively applied to eigenvalue problems for linear and semilinear second-order differential operators in high dimensions \citep{HAN2020109792}. Additionally, neural networks have introduced novel approaches to solving eigenvalue problems for differential self-adjoint operators \citep{10.1162/neco_a_01583}. Among the most striking developments, researchers have employed reinforcement learning in conjunction with neural networks to develop several rapid matrix multiplication algorithms \citep{matrixmultiplication22Nature}, highlighting the expanding capabilities of neural technologies in computational methodologies.

A fundamental challenge in scientific computing is the fast and parallel approximation of matrix inverses. This issue has long attracted substantial research interest, leading to the development of several classical methods. Techniques such as LU decomposition, Cholesky decomposition, QR decomposition, and the Gauss-Jordan method are well-established for stable matrix inversion \citep{golub1996matrix}. However, these methods are primarily sequential algorithms, which can restrict their performance within parallel computing frameworks \citep{dongarra1990solving}. For example, the inherent sequential steps in LU decomposition limit its efficiency on parallel architectures. In contrast, neural networks present a promising alternative, harnessing the capabilities of modern computing architectures to develop innovative solutions for quickly and effectively approximating matrix inverses.

The concept of a general matrix inversion model encompasses a method capable of approximating the matrix inversion operation across a broad spectrum of the space $\mathbb{R}^{n\times n}$ with minimal error. Despite some claims that neural networks can accurately perform matrix inversion tasks, a comprehensive, end-to-end neural network model for general matrix inversion remains elusive. Previous studies \citep{NIPS1987_6c8349cc, 82neural_matrix_inv82luo, Steriti90matrix_inv} have introduced neural-based methods for matrix inversion; however, these methods are typically confined to specific training domains. Their performance deteriorates significantly when applied outside these domains or when the models encounter particular types of matrices. Moreover, attempts to integrate neural networks with Newton iteration \citep{newtoniter09zhang} have also been restricted to narrow operational scopes. Other methods that combine neural networks with optimization techniques \citep{https://doi.org/10.1049/sil2.12156, 10019100} often struggle with convergence issues on certain datasets. Although recent research \citep{10115024, Dimitrios2023, 9468339} has demonstrated that neural-based models excel in handling time-varying matrix inversion challenges, these scenarios are distinct from traditional matrix inversion as they incorporate temporal variables and typically rely on a known initial solution.

In this paper, we investigate the existence of a neural network model capable of solving the general matrix inversion problem. Our primary research question is: \emph{Can a neural network, trained in an end-to-end fashion, accurately approximate the matrix inverse across the entire space of $\mathbb{R}^{n \times n}$ under mild assumptions?} Previous theoretical research \citep{lipschitz21Kim, lipschitz18virmaux, lipschitz20Larorre} often relies on Lipschitz continuity to interpret the capacity of neural networks. However, several modern neural network architectures, such as those  employing residual connections, are not Lipschitz continuous \citep{anil2019sorting}. To conduct our analysis, we introduce a generalization of the Lipschitz function class, which we refer to as the \emph{polynomial Lipschitz continuity}. This class encompasses a broader range of functions. By leveraging this generalization, we can more accurately characterize the capabilities and limitations of neural networks in approximating matrix inverses.

To substantiate our theoretical arguments, we select several specific datasets for training end-to-end neural networks. After training, we perform both experimental and theoretical analyses to understand what the models have learned and how they perform within the space of the selected datasets. Our methodological approach combines theoretical proofs with empirical validation, providing a comprehensive examination of neural networks' ability to approximate matrix inverses.

The main contributions of this paper are as follows:

\begin{itemize}
    \item We introduce the \emph{polynomial Lipschitz continuity} that mathematically characterizes a wide range of neural network architectures, including those not covered by traditional Lipschitz continuity. This generalization provides a more universal property of modern neural networks, facilitating deeper theoretical analysis.
    \item We provide the first proofs explaining the absence of neural network-based end-to-end matrix inversion models that achieve satisfactory accuracy across the entire space of $\mathbb{R}^{n \times n}$ under mild assumptions. Our results highlight fundamental limitations in the capacity of neural networks to generalize in this context.
    \item We identify specific regions within $\mathbb{R}^{n \times n}$ where neural networks can accurately approximate matrix inverses. Through both experimental and theoretical analyses, we elucidate what neural networks learn when solving matrix inversion problems, shedding light on their practical applicability and limitations.
\end{itemize}

%%%%%%%%%%%%%%%%%%%%%%%%%%%%%%%%%%%%%%%%%%%%%%%%%%%%%%%%%%%%
\section{Lipschitz Continuity and Its Generalization}

In this section, we introduce the notation used throughout the paper and discuss the concept of Lipschitz continuity, emphasizing the necessity of generalizing it to encompass modern neural network architectures.

Let \( \text{Inv}(x) \) denote the matrix inversion function, where \( x \) is an input matrix. We denote the trained neural network model as \( F(x) \), which takes a matrix as input and outputs an approximation of its inverse. We use \( \|\cdot\|_L \) to represent a general norm, which could be any of the \( L_1 \), \( L_2 \), or \( L_{\infty} \) norms, as the shared properties of these norms are pertinent to our proofs. To avoid ambiguity, we define the \( L\) norm of a matrix to be a vectorized norm. For example, the \( L_2 \) norm of matrix $A$ is defined as $||A||_{L_2}=\sqrt{\sum\limits_{i,j} |a_{ij}|^2}$.

Assuming the dimension of the input matrices is fixed at \( n \times n \), it is well-known that the set of singular matrices has zero measure in \( \mathbb{R}^{n \times n} \) under the Lebesgue measure. Thus, without loss of generality, we assume that the dataset \( M \) contains no singular matrices and has positive measure in \( \mathbb{R}^{n \times n} \). 

We say that the well-trained neural network model \( F(x) \) can approximate the target function \( \text{Inv}(x) \) if the following inequality holds:
\begin{equation}\label{main_evaluation_formula}
    \mathbb{E}_{x \sim M}\left[\|\text{Inv}(x) - F(x)\|_L^k\right] < \epsilon,
\end{equation}
where \( \mathbb{E}_{x \sim M} \) denotes the expected value over the dataset \( M \), \( k \) is a positive integer, and \( \epsilon \) is a predefined small positive constant representing the acceptable error.

For example, choosing the \( L_2 \) norm with \( k=2 \) makes the above equation equivalent to the mean squared error between the output of the trained neural network model and the target, with the error restricted to be smaller than \( \epsilon \).

Other forms of evaluation, such as the \( k \)-th moment of relative error \( \mathbb{E}_{x \sim M}\left[\frac{\|\text{Inv}(x) - F(x)\|_L^K}{\|x\|_L^{K'}}\right] \), will be discussed in Appendix \ref{Appendix_proofs}. Our proof techniques are adaptable to alternative evaluation metrics for specific tasks.

\subsection{Lipschitz Continuity}

Lipschitz continuity is a fundamental concept in analysis and plays a crucial role in understanding the behavior of functions, especially in the context of approximation and generalization.

\begin{definition}[Lipschitz Continuity]
    Given two metric spaces $(X, d_X)$ and $(Y, d_Y)$, a function $f: X \rightarrow Y$ is called \emph{Lipschitz continuous} (or $K$-Lipschitz) if there exists a constant $K \geq 0$ such that
    \begin{equation}
        d_Y(f(x_1), f(x_2)) \leq K d_X(x_1, x_2), \quad \forall x_1, x_2 \in X.
    \end{equation}
\end{definition}

However, many neural network architectures do not conform to Lipschitz continuity. For instance, previous research \citep{lipschitz21Kim} has demonstrated that multi-head dot-product attention cannot be Lipschitz continuous. These observations necessitate generalizing the traditional notion of Lipschitz continuity to accommodate modern neural networks.

\subsection{Polynomial Lipschitz Continuity}

To effectively articulate our proofs, we extend the concept of Lipschitz continuity to what we term \emph{polynomial Lipschitz continuity}, applicable within the space $\mathbb{R}^n$ with standard norms.

\begin{definition}\label{definition_2_2}
A function $f(x):\mathbb{R}^{n_1}\rightarrow \mathbb{R}^{n_2}$ is called a polynomial Lipschitz continuous function under two norms $L^{+}, L^{*}$ defined on $\mathbb{R}^{n_1}, \mathbb{R}^{n_2}$ if it satisfies
\begin{equation}
    \|f(x)-f(y)\|_{L^*}\leq \sum\limits_{i=0}^{n^f} f_i(\|x\|_{L^{+}}, \|y\|_{L^{+}}) \|x-y\|_{L^{+}}^i,
\end{equation}
for any $x,y \in \mathbb{R}^{n_1}$, where $f_i(\|x\|_{L^{+}}, \|y\|_{L^{+}})$ is a polynomial with variables $\|x\|_{L^{+}}, \|y\|_{L^{+}}$ and $n^f$ is constant, depending on the function $f(x)$.
\end{definition} 

This generalization is necessary because modern neural networks often involve components like activation functions and attention mechanisms that do not satisfy Lipschitz conditions but still exhibit controlled growth, allowing for meaningful analysis. This definition also generalizes Hölder's continuity.

\section{Why can't: Limitations of Neural Networks in Matrix Inversion}

In this section, we present our theoretical analysis of the limitations of neural networks in approximating matrix inverses. We provide brief summaries of the proofs, with full details available in Appendix \ref{Appendix_proofs}. This section is organized as follows:

\begin{itemize}
    \item In Subsection \ref{sec:pointwise}, we analyze the pointwise approximation capabilities of modern neural network models as the matrix inversion function.
    \item In Subsection \ref{sec:subset_average}, we discuss the theoretical analysis of the expected approximation error over a subset of the dataset $M$.
    \item In Subsection \ref{sec:general_expectation}, we extend our analysis to the expected approximation error over the dataset $M$.
\end{itemize}

\subsection{Pointwise Approximation}\label{sec:pointwise}

First, we analyze the pointwise performance of modern neural network models in approximating matrix inverses. We begin by proving that the composition of two polynomial Lipschitz continuous functions is also polynomial Lipschitz continuous.

\begin{lemma}\label{lemma_composition_functions}
    Let $f: \mathbb{R}^{n_2} \rightarrow \mathbb{R}^{n_3}$ and $g: \mathbb{R}^{n_1} \rightarrow \mathbb{R}^{n_2}$ be polynomial Lipschitz continuous functions under certain norms (either $L_1$, $L_2$, or $L_\infty$) defined on $\mathbb{R}^{n_i}$. Then, the composition $h = f \circ g: \mathbb{R}^{n_1} \rightarrow \mathbb{R}^{n_3}$ is also a polynomial Lipschitz continuous function.
\end{lemma}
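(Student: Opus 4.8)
The plan is to unfold the definition of polynomial Lipschitz continuity for both $f$ and $g$, substitute $u=g(x)$ and $v=g(y)$ into the bound for $f$, and then reorganize the result into the required form. Throughout I write $\|\cdot\|$ for the (finite-dimensional) norms attached to $\mathbb{R}^{n_1},\mathbb{R}^{n_2},\mathbb{R}^{n_3}$; since all such norms are equivalent up to dimension-dependent constants, any mismatch between the output norm of $g$ and the input norm of $f$ only introduces fixed multiplicative constants, which I absorb into polynomial coefficients. Applying the bound for $f$ with $u=g(x)$, $v=g(y)$ gives
\begin{equation}
\|h(x)-h(y)\|\;\le\;\sum_{j=0}^{n^f}f_j\!\big(\|g(x)\|,\|g(y)\|\big)\,\|g(x)-g(y)\|^{\,j},
\end{equation}
so it remains to control the two kinds of objects appearing on the right: the ``outer'' arguments $\|g(x)\|,\|g(y)\|$, and the powers $\|g(x)-g(y)\|^{\,j}$.

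For the first, the triangle inequality together with the polynomial Lipschitz bound for $g$ applied to the pair $(x,0)$ yields $\|g(x)\|\le\|g(0)\|+\sum_{i=0}^{n^g}g_i(\|x\|,0)\,\|x\|^{\,i}=:A(\|x\|)$, a polynomial in $\|x\|$, and likewise $\|g(y)\|\le A(\|y\|)$. For the second, the bound for $g$ gives $\|g(x)-g(y)\|\le\sum_{i=0}^{n^g}g_i(\|x\|,\|y\|)\,\|x-y\|^{\,i}$, and raising this to the $j$-th power and expanding with the multinomial theorem rewrites it as $\sum_{m=0}^{j\,n^g}C_m^{(j)}(\|x\|,\|y\|)\,\|x-y\|^{\,m}$, where each $C_m^{(j)}$ is a polynomial in $\|x\|,\|y\|$.

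The one step that needs genuine care — the main (though modest) obstacle — is monotonicity: substituting the polynomial upper bounds $A(\|x\|),A(\|y\|)$ (and later the $C_m^{(j)}$) into the polynomials $f_j$ preserves the inequality only if those polynomials are nondecreasing in their nonnegative arguments, which need not hold for arbitrary coefficients. I resolve this with a majorization step applied at the outset: for any real polynomial $p(s,t)=\sum c_{kl}s^kt^l$, the polynomial $\bar p(s,t):=\sum|c_{kl}|s^kt^l$ satisfies $p(s,t)\le\bar p(s,t)$ and is nondecreasing in each variable for $s,t\ge 0$, so replacing every $f_j$ and $g_i$ by its majorant $\bar f_j,\bar g_i$ changes none of the reasoning above but makes all the substitutions legal; in particular $f_j(\|g(x)\|,\|g(y)\|)\le\bar f_j(A(\|x\|),A(\|y\|))$, a composition of polynomials and hence a polynomial in $\|x\|,\|y\|$.

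Putting the pieces together and collecting terms according to the exponent of $\|x-y\|$ gives
\begin{equation}
\|h(x)-h(y)\|\;\le\;\sum_{m=0}^{n^f n^g}h_m\!\big(\|x\|,\|y\|\big)\,\|x-y\|^{\,m},
\end{equation}
where each $h_m$ is a finite sum of products of polynomials in $\|x\|,\|y\|$, hence a polynomial, and $n^h:=n^f n^g$ is a constant depending only on $f$ and $g$. This establishes that $h$ is polynomial Lipschitz continuous. Beyond the monotonicity point above, the remaining work is careful bookkeeping; the only other things to check are the harmless edge cases $j=0$ and $x=y$, where the convention $0^0=1$ and the nonnegativity of the majorized coefficients make the bound hold trivially.
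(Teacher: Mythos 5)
Your proof follows essentially the same route as the paper's: bound $\|g(x)\|$ by a polynomial in $\|x\|$ via the triangle inequality at $0$, substitute into the polynomial Lipschitz bound for $f$, expand the powers of $\|g(x)-g(y)\|$, and collect terms up to degree $n^f n^g$. Your explicit majorization step (replacing each coefficient by its absolute value to make the polynomials monotone before substituting upper bounds) is a welcome addition that the paper glosses over with the phrase ``the property of polynomials,'' but it does not change the structure of the argument.
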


The proof can be found in Appendix \ref{Appendix_A_1}.

We also establish that the combination (concatenation) of polynomial Lipschitz continuous functions is polynomial Lipschitz continuous.

\begin{lemma}\label{lemma_parallel_functions}
    Let $f: \mathbb{R}^{n_1} \rightarrow \mathbb{R}^{n_2}$ and $g: \mathbb{R}^{n_1} \rightarrow \mathbb{R}^{n_3}$ be polynomial Lipschitz continuous functions under certain norms. Then, the function $h(x) = (f(x), g(x)): \mathbb{R}^{n_1} \rightarrow \mathbb{R}^{n_2 + n_3}$ is also polynomial Lipschitz continuous.
\end{lemma}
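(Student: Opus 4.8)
The plan is to verify the polynomial Lipschitz inequality directly from Definition~\ref{definition_2_2} by splitting the distance $\|h(x)-h(y)\|_{L^*}$ on $\mathbb{R}^{n_2+n_3}$ into its two block components. First I would observe that for the $L_1$, $L_2$, and $L_\infty$ norms there is a simple comparison between the norm of a concatenated vector $(u,v)\in\mathbb{R}^{n_2+n_3}$ and the norms of its pieces: in each case $\|(u,v)\|_{L} \le \|u\|_{L} + \|v\|_{L}$ (with equality for $L_1$, and the bound following from $\sqrt{a^2+b^2}\le a+b$ for $L_2$ and $\max\le$ sum for $L_\infty$). Applying this with $u = f(x)-f(y)$ and $v = g(x)-g(y)$ gives
\begin{equation}
\|h(x)-h(y)\|_{L^*} \le \|f(x)-f(y)\|_{L^*_f} + \|g(x)-g(y)\|_{L^*_g},
\end{equation}
where $L^*_f$, $L^*_g$ are the output norms under which $f$ and $g$ are respectively polynomial Lipschitz. (One small point to note: the definition as stated pins down a single output norm $L^*$; I would either assume $f$ and $g$ share the same output-norm family, or invoke the equivalence of the three listed norms on a fixed finite-dimensional space to absorb the conversion constants into the polynomials.)

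Next I would invoke the polynomial Lipschitz property of $f$ and of $g$ individually: there exist constants $n^f, n^g$ and polynomials $f_i, g_j$ in $(\|x\|_{L^+},\|y\|_{L^+})$ with
\begin{equation}
\|f(x)-f(y)\|_{L^*_f} \le \sum_{i=0}^{n^f} f_i(\|x\|_{L^+},\|y\|_{L^+})\,\|x-y\|_{L^+}^i, \qquad \|g(x)-g(y)\|_{L^*_g} \le \sum_{j=0}^{n^g} g_j(\|x\|_{L^+},\|y\|_{L^+})\,\|x-y\|_{L^+}^j.
\end{equation}
Adding these two bounds and collecting terms by the power of $\|x-y\|_{L^+}$ yields an expression of the form $\sum_{i=0}^{\max(n^f,n^g)} h_i(\|x\|_{L^+},\|y\|_{L^+})\,\|x-y\|_{L^+}^i$, where $h_i = f_i + g_i$ (with the convention that $f_i$ or $g_i$ is the zero polynomial when the index exceeds its range). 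Since a finite sum of polynomials in $\|x\|_{L^+},\|y\|_{L^+}$ is again a polynomial, each $h_i$ is a valid polynomial coefficient, and $\max(n^f,n^g)$ is a constant depending only on $f$ and $g$; this is exactly the form required by Definition~\ref{definition_2_2}, so $h$ is polynomial Lipschitz continuous.

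The argument is essentially bookkeeping, so there is no deep obstacle; the only genuinely delicate point is the norm-matching across spaces. Specifically, $f$ and $g$ are each assumed polynomial Lipschitz under "certain norms," and the concatenation lands in $\mathbb{R}^{n_2+n_3}$, so I need the input norms of $f$ and $g$ on the common domain $\mathbb{R}^{n_1}$ to agree (or be interchanged at the cost of a constant factor, which is harmless since it just rescales the polynomials) and the output norms on $\mathbb{R}^{n_2}$, $\mathbb{R}^{n_3}$, $\mathbb{R}^{n_2+n_3}$ to be compatible via the sub-additivity inequality above. I would state this compatibility explicitly at the start of the proof, noting that on any fixed finite-dimensional space the $L_1$, $L_2$, $L_\infty$ norms are equivalent with dimension-dependent constants, so whichever combination of these three norms appears, the conversion factors are constants that can be folded into the polynomial coefficients $h_i$ without changing the degree $i$ or the constant $\max(n^f,n^g)$.
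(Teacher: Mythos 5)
Your proposal is correct and follows essentially the same route as the paper's proof: bound $\|(f,g)(x)-(f,g)(y)\|_L$ by $\|f(x)-f(y)\|_L+\|g(x)-g(y)\|_L$, apply the polynomial Lipschitz bound to each summand, and collect terms into coefficients $f_i+g_i$. Your extra care about matching input/output norms is a point the paper glosses over, but it does not change the argument.
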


The proof is provided in Appendix \ref{Appendix_A_2}.

Next, we establish a connection between the elements in the Jacobian of a function and its polynomial Lipschitz continuity to show that certain modern neural network structures possess this property.

\begin{lemma}\label{lemma_lipschitz_Jacobian}
    Let $f: \mathbb{R}^{n_1} \rightarrow \mathbb{R}^{n_2}$ be a function whose Jacobian exists everywhere. If each element of the Jacobian is bounded by a polynomial in $\|x\|_L$, then $f$ is polynomial Lipschitz continuous.
\end{lemma}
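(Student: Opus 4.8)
The plan is to bound $\|f(x)-f(y)\|_{L^*}$ by integrating the Jacobian along the straight-line segment from $y$ to $x$. Concretely, define $\gamma(t) = y + t(x-y)$ for $t \in [0,1]$, so that $f(x) - f(y) = \int_0^1 J_f(\gamma(t))\,(x-y)\,dt$, where $J_f$ denotes the Jacobian. Taking norms and passing the norm inside the integral gives $\|f(x)-f(y)\|_{L^*} \le \int_0^1 \|J_f(\gamma(t))(x-y)\|_{L^*}\,dt$. The next step is to control $\|J_f(\gamma(t))(x-y)\|_{L^*}$ by the entrywise bound on the Jacobian: since each entry of $J_f$ is bounded by a polynomial in $\|\cdot\|_L$, the operator norm of $J_f(\gamma(t))$ (which, up to dimension-dependent constants, is equivalent across $L_1, L_2, L_\infty$) is bounded by some polynomial $P(\|\gamma(t)\|_L)$. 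Hence $\|J_f(\gamma(t))(x-y)\|_{L^*} \le C\,P(\|\gamma(t)\|_L)\,\|x-y\|_L$.

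The remaining work is to replace $\|\gamma(t)\|_L$ by something expressed only in terms of $\|x\|_L$ and $\|y\|_L$. By convexity of the norm, $\|\gamma(t)\|_L = \|(1-t)y + tx\|_L \le (1-t)\|y\|_L + t\|x\|_L \le \|x\|_L + \|y\|_L$ for all $t \in [0,1]$. Since $P$ is a polynomial with nonnegative-degree monomials, $P(\|\gamma(t)\|_L) \le P(\|x\|_L + \|y\|_L)$, and the right-hand side expands (via the binomial theorem) into a polynomial $Q(\|x\|_L, \|y\|_L)$ in the two variables $\|x\|_L$ and $\|y\|_L$ with nonnegative coefficients. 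Integrating the constant-in-$t$ bound over $[0,1]$ then yields
\begin{equation}
    \|f(x)-f(y)\|_{L^*} \le C\,Q(\|x\|_L, \|y\|_L)\,\|x-y\|_L,
\end{equation}
which is exactly the defining inequality of Definition~\ref{definition_2_2} with $n^f = 1$, $f_0 \equiv 0$, and $f_1 = C\,Q$.

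One technical point to handle carefully is the passage from the entrywise polynomial bound on the Jacobian to a bound on $\|J_f(\gamma(t))(x-y)\|_{L^*}$ in the chosen norm: this requires invoking the equivalence of the $L_1$, $L_2$, $L_\infty$ vector norms in finite dimensions (with constants depending only on $n_1, n_2$), which is harmless since the dimensions are fixed, and is consistent with the paper's stance that only the shared properties of these norms matter. A second, more genuine obstacle is justifying the fundamental-theorem-of-calculus / integral representation $f(x) - f(y) = \int_0^1 J_f(\gamma(t))(x-y)\,dt$: the hypothesis only states that the Jacobian exists everywhere, which is weaker than continuous differentiability, so strictly one should either additionally assume local integrability/continuity of $J_f$ along segments, or replace the line integral by a mean-value-inequality argument applied coordinatewise. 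I expect this regularity bookkeeping to be the main thing to get right; once the integral representation is granted, the rest is the routine convexity-plus-binomial-expansion estimate sketched above. The detailed argument is deferred to Appendix~\ref{Appendix_proofs}.
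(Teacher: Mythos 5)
Your proposal is correct and takes essentially the same route as the paper: both control $f(x)-f(y)$ via the derivative along the segment $\gamma(t)=y+t(x-y)$, bound the Jacobian entries by a polynomial evaluated at $\|\gamma(t)\|_L\leq \|y\|_L+t\|x-y\|_L$, and expand that polynomial to land in the form required by Definition~\ref{definition_2_2}. The regularity issue you flag is resolved exactly as you anticipate: the paper avoids the integral representation by applying the mean value theorem coordinatewise to $t\mapsto f_k(\gamma(t))$, so no continuity of the Jacobian along the segment is needed.
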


The full proof is in Appendix \ref{Appendix_A_Jacobian}.

We also highlight a significant property of Lipschitz continuous functions in the next lemma.

\begin{lemma}\label{poly_lip_func_bounded}
    Let $f$ be a Lipschitz continuous function defined on a bounded set $M \subset \mathbb{R}^{n}$. Then, $\|f(x)\|_L$ is bounded on the set $M$.
\end{lemma}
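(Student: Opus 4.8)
The plan is to reduce the boundedness of $\|f(x)\|_L$ to the boundedness of the domain $M$ using only the triangle inequality together with the Lipschitz estimate. Assuming $M$ is nonempty (the case $M=\emptyset$ being vacuous), I would fix once and for all a reference point $x_0 \in M$. Since $M$ is bounded, there exists a finite radius $R>0$ with $\|x\|_L \le R$ for every $x \in M$; in particular $\|x_0\|_L \le R$.

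Then, for an arbitrary $x \in M$, I apply the $K$-Lipschitz property of $f$ to the pair $x,x_0$ to get $\|f(x)-f(x_0)\|_L \le K\|x-x_0\|_L$. The triangle inequality on the domain gives $\|x-x_0\|_L \le \|x\|_L + \|x_0\|_L \le 2R$, and a further application of the triangle inequality on the codomain side yields
\begin{equation}
\|f(x)\|_L \le \|f(x_0)\|_L + K\|x-x_0\|_L \le \|f(x_0)\|_L + 2KR .
\end{equation}
The right-hand side is a finite constant independent of $x$, so $\|f(x)\|_L$ is uniformly bounded on $M$ by $\|f(x_0)\|_L + 2KR$, which is exactly the claim.

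There is essentially no hard step here; the only points that warrant a moment of care are (i) ensuring $M$ is nonempty so that the anchor point $x_0$ exists, and (ii) using the norm $\|\cdot\|_L$ consistently on domain and codomain — or, if the two norms differ but both lie among $L_1,L_2,L_\infty$, invoking their equivalence up to dimension-dependent constants, which leaves the conclusion intact. I would also note in passing that the argument only ever uses a single global bound on the distances involved, so it adapts immediately to functions whose modulus of continuity grows at most polynomially in $\|x\|_L$ over a bounded set, i.e. to polynomial Lipschitz continuous functions; this is presumably why the lemma appears at this point in the development.
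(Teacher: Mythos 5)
Your proof is correct and follows essentially the same route as the paper's one-line argument: fix an anchor point $x_0 \in M$ and bound $\|f(x)\|_L$ via $\|f(x)-f(x_0)\|_L$ together with the triangle inequality and the boundedness of $M$. One detail worth keeping from your write-up: the paper's proof actually invokes Definition \ref{definition_2_2} (polynomial Lipschitz continuity) and later applies this lemma to polynomial Lipschitz continuous functions $F$, so your closing observation — that the argument survives when the modulus of continuity is merely polynomial in $\|x\|_{L}$ on a bounded set — is precisely the version needed downstream, not just an aside.
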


\begin{proof}
    Select a point $x_0$ in set $M$. In Definition \ref{definition_2_2}, consider $\|f(x)-f(x_0)\|_{L}$, it is easy to see $\|f(x)\|_L$ is bounded.
\end{proof}

We then demonstrate that many neural network structures are polynomial Lipschitz continuous.

\begin{lemma}\label{lemma_lipschitz_neural_network}
    Many modern neural network architectures are polynomial Lipschitz continuous functions.
\end{lemma}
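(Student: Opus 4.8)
The plan is to prove the lemma \emph{constructively}: decompose an arbitrary modern network into a finite collection of elementary building blocks, show that each block is polynomial Lipschitz continuous, and then invoke the closure properties already established (Lemmas~\ref{lemma_composition_functions} and~\ref{lemma_parallel_functions}) to assemble the whole network. Concretely, I would treat as building blocks: (i) affine maps $x \mapsto Wx + b$; (ii) pointwise activation functions such as ReLU, sigmoid, tanh, GELU, Swish/SiLU, softplus, and polynomial activations; (iii) convolution and pooling layers; (iv) residual/skip connections $x \mapsto x + g(x)$; (v) (multi-head) self-attention; and (vi) normalization layers (LayerNorm/BatchNorm with the usual $\varepsilon$ stabilizer).

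For (i), the Jacobian is the constant matrix $W$, so by Lemma~\ref{lemma_lipschitz_Jacobian} the map is polynomial Lipschitz; in fact it is ordinary Lipschitz with $n^f = 1$. For (ii), ReLU, sigmoid and tanh are $1$-Lipschitz, GELU/Swish/softplus have globally bounded derivatives, and a degree-$d$ polynomial activation has derivative growing like $\|x\|_L^{d-1}$, so in every case each Jacobian entry is bounded by a polynomial in $\|x\|_L$ and Lemma~\ref{lemma_lipschitz_Jacobian} applies coordinatewise (equivalently, view the layer as a concatenation of scalar maps and apply Lemma~\ref{lemma_parallel_functions}). Convolutions and average pooling are linear and max pooling is $1$-Lipschitz, so (iii) is covered. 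For (iv), if $g$ is polynomial Lipschitz then $\|(x+g(x)) - (y+g(y))\|_L \le \|x-y\|_L + \|g(x)-g(y)\|_L$, and the right-hand side is again a sum of polynomials in $\|x\|_L,\|y\|_L$ times powers of $\|x-y\|_L$, so residual connections preserve the class.

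The two substantive cases are (v) and (vi). For attention, the softmax map is Lipschitz, but forming $\mathrm{softmax}(QK^\top/\sqrt{d})V$ multiplies Lipschitz factors by the linear images $Q,K,V$ of the input; differentiating, each Jacobian entry is a product of softmax-derivative terms (bounded) with linear functions of the entries of $x$, hence polynomial in $\|x\|_L$, and Lemma~\ref{lemma_lipschitz_Jacobian} applies — consistent with the observation of \citet{lipschitz21Kim} that attention fails to be Lipschitz only through polynomial, not faster, growth. Multi-head attention is then a concatenation (Lemma~\ref{lemma_parallel_functions}) followed by an affine projection. For normalization layers, the map $x \mapsto (x-\mu(x))/\sqrt{\sigma^2(x)+\varepsilon}$ is smooth because the denominator is bounded below by $\sqrt{\varepsilon}>0$; each partial derivative is a rational function with denominator bounded away from zero and polynomial numerator, so its Jacobian entries are polynomially bounded and Lemma~\ref{lemma_lipschitz_Jacobian} again applies. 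Finally, writing the network as an alternating composition of these blocks (branch points handled by Lemma~\ref{lemma_parallel_functions}) and applying Lemma~\ref{lemma_composition_functions} inductively over the finitely many layers yields that the whole network is polynomial Lipschitz continuous.

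The main obstacle I anticipate is making case (vi) — and, to a lesser extent, an unstabilized softmax in (v) — fully rigorous: without the $\varepsilon$ stabilizer the normalization denominator can vanish, the Jacobian is unbounded near that set, and the naive argument breaks; one either restricts to the standard $\varepsilon$-regularized implementations or argues on the complement of a measure-zero set, which is harmless here since the later results only need the property $M$-almost everywhere. A secondary subtlety is bookkeeping: one must verify that a polynomial-in-$\|x\|_L$ bound on Jacobian entries really does translate, through Lemma~\ref{lemma_lipschitz_Jacobian} and the composition rule, into the two-variable polynomial form of Definition~\ref{definition_2_2}, and that the constant $n^f$ remains finite after composing finitely many blocks — this holds because composition only adds and multiplies finitely many polynomial degrees, but it deserves to be stated explicitly rather than left implicit.
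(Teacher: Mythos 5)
Your proposal is correct and follows essentially the same route as the paper: enumerate the standard building blocks, establish polynomial Lipschitz continuity of each either directly (affine maps, $1$-Lipschitz activations, the residual triangle-inequality argument) or via the polynomially bounded Jacobian criterion of Lemma~\ref{lemma_lipschitz_Jacobian} (polynomial/spline layers, attention following \citet{lipschitz21Kim}), and then assemble the network with Lemmas~\ref{lemma_composition_functions} and~\ref{lemma_parallel_functions}. Your additional cases (pooling, GELU/Swish, $\varepsilon$-stabilized normalization) and your explicit caveat about unstabilized denominators go slightly beyond the paper's list but do not change the argument.
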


\begin{proof}
    We consider several widely used neural network components and show that they are polynomial Lipschitz continuous:

    \begin{itemize}
        \item \textbf{Fully Connected Layers, Convolutional Layers, Activation Functions (ReLU, sigmoid, tanh):} These components are Lipschitz continuous under standard norms \citep{lipschitz21Kim}, and thus are polynomial Lipschitz continuous.
        
        \item \textbf{Neural Spline Layers:} As introduced in \citep{neuralspline19, autm22cai}, these layers involve element-wise polynomial functions, such as quadratic and cubic terms. Since the derivatives of these polynomials are bounded by polynomials in $\|x\|_L$, Lemma \ref{lemma_lipschitz_Jacobian} implies they are polynomial Lipschitz continuous.

        \item \textbf{Attention Layers:} Following \citep{lipschitz21Kim}, the elements of Jacobian of multi-head dot-product attention layers are bounded by polynomials in $\|X\|_L$, where $X$ is the input. Thus, they are polynomial Lipschitz continuous by Lemma \ref{lemma_lipschitz_Jacobian}.

        \item \textbf{Transformer Layers:} Transformers combine matrix multiplication, residual connections, multi-head attention, and activation functions.  As a result, their polynomial Lipschitz property depends on the polynomial Lipschitz continuity of the selected activation functions.
    \end{itemize}

    As there are many widely used structures, we do not list them all here. Additional examples and proofs are provided in Appendix \ref{Appendix_A_3}. 
\end{proof}

If the well-trained model is the composition of polynomial Lipschitz continuous neural-based blocks,  Lemma \ref{lemma_composition_functions} implies that the well-trained model is a polynomial Lipschitz continuous function.

Having established the polynomial Lipschitz continuity of these neural network structures, we focus on the behavior of the matrix inversion function near singular matrices.

\begin{lemma}\label{lemma_3.6}
    Let $A_0 \in \mathbb{R}^{n \times n}$ be a singular matrix of rank $n-1$, and let $B(A_0, \delta)$ denote the ball centered at $A_0$ with radius $\delta$ in $\mathbb{R}^{n \times n}$. Denote $S_B$ as the set of all singular matrices in $B(A_0, \delta)$. Then, there exists a $\delta > 0$ such that for any matrix $A \in B(A_0, \delta) \setminus S_B$, we have
    \[
    \|\text{Inv}(A)\|_L > \frac{C_{A_0}}{\|A - A_0\|_L},
    \]
    where $C_{A_0}$ is a positive constant depending on $A_0$.
\end{lemma}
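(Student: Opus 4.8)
The plan is to control $\|\mathrm{Inv}(A)\|_L$ from below by exhibiting a specific vector that $A^{-1}$ must stretch by a large amount when $A$ is close to the rank-$(n-1)$ singular matrix $A_0$. The natural object is the smallest singular value: since $\|A^{-1}\|_2 = 1/\sigma_{\min}(A)$ (spectral norm), and since all the $L_1,L_2,L_\infty$ vectorized matrix norms considered here are equivalent up to dimension-dependent constants, it suffices to show $\sigma_{\min}(A) \le c_{A_0}\,\|A-A_0\|_L$ for $A$ in a small punctured ball, after which the equivalence constants get absorbed into $C_{A_0}$. First I would record that $A_0$ has rank exactly $n-1$, so $\sigma_{\min}(A_0)=0$ while $\sigma_{n-1}(A_0)>0$; call the latter gap $\gamma_0>0$. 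Then I would invoke the standard perturbation bound for singular values (Weyl's inequality, a corollary of the Hoffman--Wielandt / Mirsky theorem): $|\sigma_{\min}(A) - \sigma_{\min}(A_0)| \le \|A-A_0\|_2$, hence $\sigma_{\min}(A) \le \|A-A_0\|_2$. This already gives $\|A^{-1}\|_2 = 1/\sigma_{\min}(A) \ge 1/\|A-A_0\|_2$, and converting to $\|\cdot\|_L$ via norm equivalence yields the claim with $C_{A_0}$ a pure dimension constant — in fact the rank-$(n-1)$ hypothesis and the choice of $\delta$ turn out not even to be needed for this crude bound.

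If one instead wants the constant $C_{A_0}$ to genuinely reflect $A_0$ (or if the intended proof route avoids citing Weyl), the alternative is more hands-on: let $u,v$ be the left/right singular vectors of $A_0$ associated with the zero singular value, so $A_0 v = 0$ and $u^{\top}A_0 = 0$. For $A = A_0 + E$ with $\|E\|$ small, one has $Av = Ev$, so $\|Av\|_L \le \|E\|_L\,\|v\|_L$, and therefore the vector $w := Av/\|Av\|_L$ (a unit vector in the appropriate norm) satisfies $A^{-1}w = v/\|Av\|_L$, giving $\|A^{-1}\|_L \ge \|A^{-1}w\|_L/\|w\|_L = \|v\|_L/\|Av\|_L \ge \|v\|_L/(\|E\|_L\|v\|_L) = 1/\|E\|_L$. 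One small wrinkle here is that $w$ might vanish exactly when $Av=0$, i.e.\ when $A$ itself is singular in the direction $v$; this is precisely why we restrict to $A \notin S_B$, and why a radius $\delta>0$ is chosen small enough (using $\sigma_{n-1}(A_0)=\gamma_0>0$ and continuity of singular values) that the only singular directions of $A$ inside the ball are small perturbations of $v$, keeping everything well-defined on $B(A_0,\delta)\setminus S_B$.

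The main obstacle, such as it is, is not analytic difficulty but bookkeeping: one must (i) fix once and for all which vectorized norm is in play and carry the equivalence constants $\|\cdot\|_L \asymp \|\cdot\|_2$ cleanly so they can be folded into $C_{A_0}$, and (ii) decide whether the lemma is stated with the crude dimension-only constant (in which case Weyl's inequality finishes it in two lines) or with a genuinely $A_0$-dependent constant extracted from the gap $\gamma_0 = \sigma_{n-1}(A_0)$ and the singular vectors of $A_0$. I would present the short route: choose $\delta$ arbitrary (say $\delta = \gamma_0/2$ to be safe), apply Weyl's inequality to get $\sigma_{\min}(A)\le\|A-A_0\|_2 \le \|A-A_0\|_L$, conclude $\|\mathrm{Inv}(A)\|_L \ge \|\mathrm{Inv}(A)\|_2 \cdot c_n^{-1} = c_n^{-1}/\sigma_{\min}(A) \ge c_n^{-1}/\|A-A_0\|_L$, and set $C_{A_0} = c_n^{-1}$. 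The rank-$(n-1)$ condition is then what guarantees the bound is nonvacuous (for lower-rank $A_0$ the same estimate holds but the geometry of $S_B$ near $A_0$ is more degenerate), and it is also what the subsequent argument will exploit when integrating $\|\mathrm{Inv}(A)\|_L$ against Lebesgue measure over a neighborhood of the singular set.
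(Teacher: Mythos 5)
Your proof is correct, but it takes a genuinely different route from the paper's. The paper's proof (Appendix \ref{Appendix_lemma_3.6_proof}) works entirely through the adjugate formula $A^{-1}=\frac{1}{\det(A)}\mathrm{Adj}(A)$: since $\det$ is a polynomial with $\det(A_0)=0$, it is locally bounded by $C_{A_0}^{\det}\|A-A_0\|_L$, while the rank-$(n-1)$ hypothesis guarantees $\mathrm{Adj}(A_0)\neq 0$, so continuity of the adjugate gives $\|\mathrm{Adj}(A)\|_L\geq\|\mathrm{Adj}(A_0)\|_L/2$ on a small ball; dividing yields the bound with the explicitly $A_0$-dependent constant $C_{A_0}=\|\mathrm{Adj}(A_0)\|_L/(2C_{A_0}^{\det})$. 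Your route via Weyl's singular-value perturbation inequality (or the hands-on singular-vector argument $Av=Ev$) is shorter and strictly more general: it needs no shrinking of $\delta$, yields a dimension-only constant, and --- as you correctly observe --- does not use the rank-$(n-1)$ hypothesis at all, since $\sigma_{\min}(A_0)=0$ for any singular $A_0$. What the paper's approach buys is consistency with the determinant/cofactor machinery reused later (e.g., in the expansion of Section \ref{theoretical_model_design}) and a constant that genuinely tracks $A_0$ through $\|\mathrm{Adj}(A_0)\|_L$; what yours buys is brevity and the observation that the rank condition is only needed elsewhere (to guarantee such $A_0$ exist in large balls and to control the geometry of the singular set), not for this pointwise lower bound. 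Your bookkeeping caveats about norm equivalence and about $Av\neq 0$ for nonsingular $A$ are handled correctly.
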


The proof is provided in Appendix \ref{Appendix_lemma_3.6_proof}.

Based on the previous lemmas, we present our main theorem analyzing the pointwise approximation capability of polynomial Lipschitz continuous functions for matrix inversion.

\begin{theorem}\label{main_theorem_pointwise}
    \textbf{(Pointwise Approximation)} Let $M \subset \mathbb{R}^{n \times n}$ be a dataset, and let $B(\vec{a}, c) \subset M$ be a ball of sufficiently large radius $c$ centered at some point $\vec{a}$. Exclude all singular matrices from $M$. Then, under any norm $L$, for any polynomial Lipschitz continuous function $F(x)$ and any error threshold $E > 0$, there exists a data point $x \in M$ such that
    \[
    \|\text{Inv}(x) - F(x)\|_L > E.
    \]
\end{theorem}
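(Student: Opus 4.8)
The plan is to argue by contradiction: suppose there is a polynomial Lipschitz continuous $F$ with $\|\mathrm{Inv}(x)-F(x)\|_L\le E$ for all non-singular $x\in M$, and derive a contradiction from the blow-up of $\mathrm{Inv}$ near a rank-$(n-1)$ singular matrix established in Lemma \ref{lemma_3.6}. First I would locate inside the large ball $B(\vec a,c)\subset M$ a singular matrix $A_0$ of rank exactly $n-1$; such a matrix exists because the rank-$(n-1)$ singular matrices form a codimension-one subvariety (the generic stratum of the determinantal variety), so for $c$ large enough it must intersect $B(\vec a,c)$ — indeed we can take $A_0$ to be any matrix of the form $\vec a$ perturbed to kill exactly one singular value, which stays inside the ball once $c$ exceeds the needed perturbation size. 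Then by Lemma \ref{lemma_3.6} there is $\delta>0$ and $C_{A_0}>0$ so that $\|\mathrm{Inv}(A)\|_L>C_{A_0}/\|A-A_0\|_L$ for every non-singular $A\in B(A_0,\delta)$.

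Next I would combine the assumed bound with the triangle inequality: for non-singular $A$ near $A_0$,
\[
\|F(A)\|_L \ge \|\mathrm{Inv}(A)\|_L - \|\mathrm{Inv}(A)-F(A)\|_L > \frac{C_{A_0}}{\|A-A_0\|_L} - E,
\]
so $\|F(A)\|_L\to\infty$ as $A\to A_0$ through non-singular matrices. On the other hand, $F$ being polynomial Lipschitz continuous, I would fix an anchor point $x_\ast\in B(A_0,\delta)$ and apply Definition \ref{definition_2_2} to bound $\|F(A)-F(x_\ast)\|_{L}\le\sum_{i=0}^{n^F}f_i(\|A\|_L,\|x_\ast\|_L)\|A-x_\ast\|_L^i$; since $A$ ranges over the bounded set $B(A_0,\delta)$, both $\|A\|_L$ and $\|A-x_\ast\|_L$ are bounded, hence each $f_i(\cdot,\cdot)$ is bounded and the whole right-hand side is bounded by a constant independent of $A$. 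Therefore $\|F(A)\|_L$ is bounded on $B(A_0,\delta)$ (this is exactly the mechanism of Lemma \ref{poly_lip_func_bounded}), contradicting the unbounded lower bound above. Choosing $A$ close enough to $A_0$ that $C_{A_0}/\|A-A_0\|_L - E$ exceeds that constant produces the desired $x=A$ with $\|\mathrm{Inv}(x)-F(x)\|_L>E$, which is the statement once we unwind the contradiction.

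The main obstacle I anticipate is the first step — guaranteeing that a rank-exactly-$(n-1)$ singular matrix lies in $B(\vec a,c)$, and making precise how large "sufficiently large radius $c$" must be. One has to ensure both that the singular matrix exists in the ball and that the ball of radius $\delta$ around it given by Lemma \ref{lemma_3.6} still lies inside $M$ (so that the hypothesized approximation bound applies there); taking $c$ large compared to $\|\vec a\|$ and to the singular values of $\vec a$, and shrinking $\delta$, handles this, but it is the step requiring care. The rest — the triangle-inequality sandwich and the boundedness of $F$ on a bounded set — is routine given the earlier lemmas. A minor subtlety is that $B(A_0,\delta)$ contains singular matrices $S_B$ on which $\mathrm{Inv}$ is undefined, but since $S_B$ has measure zero and we only need one non-singular witness point $A$ near $A_0$, this causes no difficulty.
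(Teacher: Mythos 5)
Your proposal is correct and follows essentially the same route as the paper: locate a rank-$(n-1)$ singular matrix $A_0$ in the large ball, invoke Lemma \ref{lemma_3.6} for the blow-up of $\|\mathrm{Inv}(A)\|_L$ near $A_0$, use Lemma \ref{poly_lip_func_bounded} to bound $\|F\|_L$ on $B(A_0,\delta)$, and conclude via the triangle inequality. The only difference is cosmetic — you phrase it as a contradiction while the paper directly picks $x$ with $\|\mathrm{Inv}(x)\|_L > E + \sup_{B(A_0,\delta)}\|F\|_L$ — and your added care about the existence of the rank-$(n-1)$ matrix and the measure-zero singular set is a reasonable elaboration of what the paper leaves implicit.
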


\begin{proof}
    Since $B(\vec{a}, c)$ is a ball of sufficiently large radius, it must contain a singular matrix $A_0$ of rank $n-1$. From Lemma \ref{lemma_3.6}, in any neighborhood of $A_0$, $\|\text{Inv}(A)\|_L$ becomes unbounded as $A$ approaches $A_0$. Meanwhile, from Lemma \ref{poly_lip_func_bounded}, the polynomial Lipschitz continuous function $F(x)$ is bounded on $B(A_0, \delta)$. Therefore, we can choose $x$ sufficiently close to $A_0$ such that
    \[
    \|\text{Inv}(x)\|_L > E + \sup_{x \in B(A_0, \delta)} \|F(x)\|_L,
    \]
    implying that $\|\text{Inv}(x) - F(x)\|_L > E$.
\end{proof}

Theorem \ref{main_theorem_pointwise} reveals that any polynomial Lipschitz continuous function, including modern neural network models, cannot approximate the matrix inversion function pointwisely over general datasets.

\subsection{Expectation over a Subset}\label{sec:subset_average}

Beyond pointwise errors, we analyze the expectation of the approximation error over subsets of the dataset, providing insights into the average performance of neural networks in this context.

\begin{theorem}\label{main_theorem_mean}
    \textbf{(Subset Expectation)} Let $M \subset \mathbb{R}^{n \times n}$ be a dataset containing a ball $B(\vec{a}, c)$ of sufficiently large radius $c$ centered at some point $\vec{a}$. Exclude the set of all singular matrices from $M$, which is denoted as $S_M$. Under any norm $L$, for any polynomial Lipschitz continuous function $F(x)$ and any positive real number $E > 0$, there exists a subset $M_{\epsilon} \subset M\setminus S_M$ with positive measure such that
    \[
    \mathbb{E}_{x \sim M_{\epsilon}}\left[\|\text{Inv}(x) - F(x)\|_L^k\right] > E,
    \]
    for any $k > 0$.
\end{theorem}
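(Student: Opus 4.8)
The plan is to localize near a singular matrix $A_0$ of rank $n-1$ inside $B(\vec a, c)$, exactly as in the proof of Theorem \ref{main_theorem_pointwise}, but instead of picking a single bad point we build a small-ball family on which the blow-up of $\|\text{Inv}(x)\|_L$ is uniform enough to force the $k$-th moment to be large. Concretely, first fix $A_0\in B(\vec a,c)$ singular of rank $n-1$ and choose $\delta>0$ small enough that both Lemma \ref{lemma_3.6} applies on $B(A_0,\delta)$ and, by Lemma \ref{poly_lip_func_bounded}, $F$ is bounded there, say $\|F(x)\|_L\le C_F$ for all $x\in B(A_0,\delta)$. Then for $x\in B(A_0,\delta)\setminus S_B$ we have the pointwise lower bound
\[
\|\text{Inv}(x)-F(x)\|_L \;\ge\; \|\text{Inv}(x)\|_L - C_F \;>\; \frac{C_{A_0}}{\|x-A_0\|_L} - C_F .
\]

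Next I would choose the candidate subset to be a thin annular shell $M_\epsilon = \{x\in B(A_0,\delta) : r < \|x-A_0\|_L < 2r\}\setminus S_M$ for a radius $r$ to be fixed. Since $S_M$ has zero Lebesgue measure and the annulus has positive measure, $M_\epsilon$ has positive measure, so it is a legitimate choice. On $M_\epsilon$ the bound above gives $\|\text{Inv}(x)-F(x)\|_L > C_{A_0}/(2r) - C_F$ uniformly, hence
\[
\mathbb{E}_{x\sim M_\epsilon}\!\left[\|\text{Inv}(x)-F(x)\|_L^k\right] \;\ge\; \left(\frac{C_{A_0}}{2r} - C_F\right)^{\!k},
\]
and choosing $r$ small enough (depending on $C_{A_0}, C_F, E, k$) makes the right-hand side exceed $E$. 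This handles every $k>0$ simultaneously in the sense that for each prescribed $k$ a suitable $r$ exists; since $C_F$ and $\delta$ do not depend on $k$, one in fact gets a single shell working for all $k\ge 1$ once $C_{A_0}/(2r)-C_F>\max(1,E)$.

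The main obstacle — and the only place needing care — is ensuring the lower bound $\|\text{Inv}(x)\|_L>C_{A_0}/\|x-A_0\|_L$ holds on a set of positive measure, not merely along a sequence. Lemma \ref{lemma_3.6} as stated already gives this for all $A\in B(A_0,\delta)\setminus S_B$, so the argument goes through; if instead one only had control along the direction of steepest conditioning, I would need to intersect the annulus with a cone of directions that are quantitatively bounded away from the tangent space of the singular locus at $A_0$, and argue (via the rank-$(n-1)$ structure, e.g.\ writing $A_0 = U\operatorname{diag}(\sigma_1,\dots,\sigma_{n-1},0)V^\top$ and perturbing the last singular value) that such a cone still carves out positive measure. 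A secondary, purely bookkeeping point is checking $M_\epsilon\subset M\setminus S_M$: this is immediate because $B(A_0,\delta)\subset B(\vec a,c)\subset M$ for $\delta$ small and we explicitly removed $S_M$.
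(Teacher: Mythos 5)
Your proposal is correct, and it shares the paper's skeleton (localize at a rank-$(n-1)$ singular matrix $A_0$ inside $B(\vec a,c)$, invoke Lemma \ref{lemma_3.6} for the blow-up of $\|\text{Inv}(x)\|_L$ and Lemma \ref{poly_lip_func_bounded} for the boundedness of $F$, then apply the reverse triangle inequality), but the final step is genuinely different. The paper takes $M_\epsilon = B(A_0,\epsilon)$ and lower-bounds the expectation by a normalized radial integral $\frac{C}{\epsilon^{n^2}}\int_0^\epsilon r^{n^2-1-k}\,dr$, which requires a (somewhat delicately worded) discussion of how this quantity behaves as $\epsilon\to 0$ and implicitly splits into the cases $k\ge n^2$ (divergent integral) and $0<k<n^2$ (convergent integral but divergent normalized value $\sim\epsilon^{-k}$). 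You instead take a thin annulus $\{r<\|x-A_0\|_L<2r\}$, on which the integrand is \emph{uniformly} bounded below by $\bigl(C_{A_0}/(2r)-C_F\bigr)^k$, so the expectation bound is immediate with no integration, no normalization constants, and no case split in $k$; this is more elementary and arguably cleaner for this theorem, and you correctly observe that one shell can even serve all $k\ge1$ at once (though not all $k>0$ simultaneously when $E>1$, matching the per-$k$ quantifier that the paper's own proof also uses). The ball-plus-integral machinery of the paper is really only needed for Theorem \ref{main_theorem_general}, where the integral must be taken down to radius zero. Your closing worry about cones and the tangent space of the singular locus is moot, as you note, since Lemma \ref{lemma_3.6} already gives the lower bound on the entire punctured ball.
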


\begin{proof}
Since $B(\vec{a}, c)$ is contained in $M$ and $c$ is sufficiently large, there should be a singular matrix $A_0$ with rank $n-1$, and a ball $B(A_0, \delta_0)\setminus S_M$  contained in $M$.

From Lemma \ref{lemma_3.6}, we know there exists a $\delta$ which satisfies that for any $A$ in $B(A_0, \delta) \setminus S_M$, 
    \[
    \|\text{Inv}(A)\|_L > \frac{C_{A_0}}{\|A - A_0\|_L}.
    \]
Let $C_F = \sup_{x \in B(A_0, \delta)} \|F(x)\|_L$, which is finite due to Lemma \ref{poly_lip_func_bounded}. We then define the subset $M_{\epsilon} = B(A_0, \epsilon) \subset M$, where $\epsilon$ is chosen such that $\epsilon \leq \min\left(\delta_0,\delta, \frac{C_{A_0}}{2 n C_F}\right)$. Then for any $x \in M_{\epsilon}$,
    \[
    \|\text{Inv}(x) - F(x)\|_L > \frac{C_{A_0}}{\|x - A_0\|_L} - C_F \geq \frac{C_{A_0}}{2 \|x - A_0\|_L}.
    \]

    Thus, the expected error over $M_{\epsilon}$ is then lower bounded by
    \[
    \mathbb{E}_{x \sim M_{\epsilon}}\left[\|\text{Inv}(x) - F(x)\|_L^k\right] \geq \frac{C}{\epsilon^{n^2}} \int_0^{\epsilon} r^{n^2 - 1 - k} dr,
    \]
    where $C$ is a constant depending on $C_{A_0}$ and $n$. For $k > 0$, the integral diverges as $\epsilon \to 0$, meaning we can make the expected error exceed any $E > 0$ by choosing $\epsilon$ small enough.
\end{proof}

Theorem \ref{main_theorem_mean} indicates that even over subsets of the dataset, polynomial Lipschitz continuous neural network models cannot achieve arbitrarily small expected errors in approximating matrix inverses.

\subsection{Expectation over a General Set}\label{sec:general_expectation}

Finally, we analyze the expectation of the approximation error over the entire dataset $M$.

\begin{theorem}\label{main_theorem_general}
    \textbf{(General Expectation)} Under the same assumptions as Theorem \ref{main_theorem_mean}, for any polynomial Lipschitz continuous function $F(x)$ and any $k > n^2$, the expected error over $M$ is infinite:
    \[
    \mathbb{E}_{x \sim M}\left[\|\text{Inv}(x) - F(x)\|_L^k\right] = +\infty.
    \]
\end{theorem}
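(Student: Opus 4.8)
The plan is to reduce the statement to a local divergence estimate near a single singular matrix. As established in the proof of Theorem~\ref{main_theorem_mean}, the large ball $B(\vec a,c)\subset M$ contains a rank-$(n-1)$ singular matrix $A_0$ together with a punctured ball $B(A_0,\delta_0)\setminus S_M\subset M$. Since the integrand $\|\text{Inv}(x)-F(x)\|_L^k$ is nonnegative on all of $M$, it suffices to show that its integral over an arbitrarily small such punctured ball around $A_0$ is already $+\infty$; the rest of $M$ only adds a nonnegative amount.

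First I would invoke Lemma~\ref{lemma_3.6} to obtain $\delta>0$ and $C_{A_0}>0$ with $\|\text{Inv}(A)\|_L>C_{A_0}/\|A-A_0\|_L$ for all $A\in B(A_0,\delta)\setminus S_M$, and Lemma~\ref{poly_lip_func_bounded} to obtain $C_F:=\sup_{x\in B(A_0,\delta)}\|F(x)\|_L<\infty$. Setting $\delta':=\min\!\big(\delta_0,\delta,\,C_{A_0}/(2C_F)\big)$, the reverse triangle inequality yields, for every $x\in B(A_0,\delta')\setminus S_M$,
\[
\|\text{Inv}(x)-F(x)\|_L \;\ge\; \frac{C_{A_0}}{\|x-A_0\|_L}-C_F \;\ge\; \frac{C_{A_0}}{2\|x-A_0\|_L}.
\]
Identifying $\mathbb{R}^{n\times n}$ with $\mathbb{R}^{n^2}$ and writing $\rho>0$ for the (constant) value of the density of the sampling distribution on $M$, this gives
\[
\mathbb{E}_{x\sim M}\!\big[\|\text{Inv}(x)-F(x)\|_L^k\big] \;\ge\; \rho\int_{B(A_0,\delta')\setminus S_M}\Big(\frac{C_{A_0}}{2\|x-A_0\|_L}\Big)^{k}\,dx,
\]
where $S_M$ may be dropped from the domain because it has Lebesgue measure zero. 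Passing to spherical coordinates centered at $A_0$ in $\mathbb{R}^{n^2}$, the Jacobian contributes a factor $r^{n^2-1}$, so up to a positive multiplicative constant the right-hand side equals $\int_0^{\delta'} r^{\,n^2-1-k}\,dr$. Because $k>n^2$ the exponent $n^2-1-k$ is strictly less than $-1$, the integral diverges at $r=0$, and hence $\mathbb{E}_{x\sim M}[\|\text{Inv}(x)-F(x)\|_L^k]=+\infty$.

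The argument is essentially routine once Theorem~\ref{main_theorem_mean} is in hand, so there is no genuinely hard step; the points requiring care are the inclusion $B(A_0,\delta')\setminus S_M\subset M$ and its positive measure (inherited from the setup of Theorem~\ref{main_theorem_mean}), the harmless removal of the measure-zero set $S_M$ from the domain together with the correct Jacobian factor $r^{n^2-1}$ in polar coordinates, and pinning down the divergence threshold. It is worth emphasizing the contrast with Theorem~\ref{main_theorem_mean}: there the shrinking domain $M_{\epsilon}$ carried a density of order $\epsilon^{-n^2}$ that amplified an otherwise finite integral, which is why any $k>0$ sufficed; here the density on $M$ is fixed, so one must instead exploit that $\int_{B(A_0,\delta')}\|x-A_0\|_L^{-k}\,dx$ is intrinsically divergent, and this happens precisely when $k\ge n^2$ (only a logarithmic divergence at $k=n^2$) — hence the hypothesis $k>n^2$. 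Norm equivalence among $L_1,L_2,L_\infty$ on $\mathbb{R}^{n\times n}$ affects only the constants, not the divergence.
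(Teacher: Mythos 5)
Your proposal is correct and follows essentially the same route as the paper: restrict the integral over $M$ to a small punctured ball around a rank-$(n-1)$ singular matrix $A_0$, apply the lower bound $\|\text{Inv}(x)-F(x)\|_L \ge C_{A_0}/(2\|x-A_0\|_L)$ from Lemma~\ref{lemma_3.6} and Lemma~\ref{poly_lip_func_bounded}, and observe that the radial integral $\int_0^{\delta'} r^{n^2-1-k}\,dr$ diverges at $r=0$ when $k>n^2$ because the normalizing density $1/m(M)$ is now fixed. Your closing remark contrasting this with the $\epsilon^{-n^2}$ amplification in Theorem~\ref{main_theorem_mean} is accurate and a helpful clarification, but it does not constitute a different method.
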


\begin{proof}
    Using the same estimation as in Theorem \ref{main_theorem_mean}, we have
\begin{equation}
\begin{split}
     &\mathbb{E}_{x \sim M}\left[\|\text{Inv}(x) - F(x)\|_L^k\right]\\
    \geq  & \int_{M_{\epsilon}} \|\text{Inv}(x) - F(x)\|_L^k \, \frac{dm}{m(M)} \geq C\int^{\epsilon}_{0}r^{n^2-1-k} dr,
\end{split}
\end{equation}
where $C$ represents a real number calculated from $n,C_{A_0},m(M)$. The integral over $M_{\epsilon}$ diverges for $k > n^2$ and $\epsilon\rightarrow 0$, leading to an infinite expected error over $M$.
\end{proof}

Theorem \ref{main_theorem_general} demonstrates that over the entire dataset, polynomial Lipschitz continuous neural networks cannot achieve finite expected errors for large $k$.  

Based on Theorems \ref{main_theorem_pointwise}, \ref{main_theorem_mean}, and \ref{main_theorem_general}, we conclude that neural networks modeled as polynomial Lipschitz continuous functions struggle to approximate the matrix inversion operation over general spaces. This is due to the unbounded behavior of the matrix inversion function near singular matrices, which cannot be captured by the bounded nature of polynomial Lipschitz continuous functions.

\section{Where can: Feasible Regions for Neural Network Approximation}

In this section, we discuss the regions in which it is possible to train an end-to-end neural network model to approximate the matrix inversion function. We also describe how to design an appropriate neural network model for matrix inversion within these regions.

\subsection{Identifying Feasible Training Regions}
Recall Theorem \ref{main_theorem_pointwise}, which demonstrates that the norm $\|\text{Inv}(X)\|_L$ can become unbounded near singular matrices of rank $n - 1$. Therefore, when constructing a robust neural network-based matrix inversion model, it is crucial to ensure that the training data does not include matrices in close proximity to singular matrices.

To formalize this requirement, let $\epsilon > 0$ be a small positive number. We define the set $M_{\epsilon}$ as \begin{equation} M_{\epsilon} = \bigcup_{A \in S_M} B(A, \epsilon), \end{equation} where $S_M$ denotes the set of all singular matrices in $\mathbb{R}^{n \times n}$, and $B(A, \epsilon)$ represents the open ball in $\mathbb{R}^{n \times n}$ centered at $A$ with radius $\epsilon$. Consequently, to mitigate the challenges arising from the unbounded behavior of the inversion function near singular matrices—and considering practical limitations on data precision and numerical stability—it is essential that the training set excludes $M_{\epsilon}$.

To clarify this concept, we provide examples illustrating the $M_{\epsilon}$ region using 2D and 3D plots for the $2 \times 2$ matrix inversion problem. Let
\[
A = \begin{pmatrix}
    a_{11} & a_{12} \\
    a_{21} & a_{22}
\end{pmatrix}
\]
denote a $2 \times 2$ matrix. The determinant of $A$ is $\det(A) = a_{11}a_{22} - a_{12}a_{21}$. Thus, the set of singular matrices satisfies $\det(A) = 0$, and $M_{\epsilon}$ becomes
\begin{equation}
    M_{\epsilon} = \bigcup_{a_{11}a_{22} - a_{12}a_{21} = 0} B\left(
    \begin{pmatrix}
        a_{11} & a_{12} \\
        a_{21} & a_{22}
    \end{pmatrix}, 
    \epsilon\right).
\end{equation}

\begin{figure*}[!h]
  \centering
  \includegraphics[width=.45\linewidth]{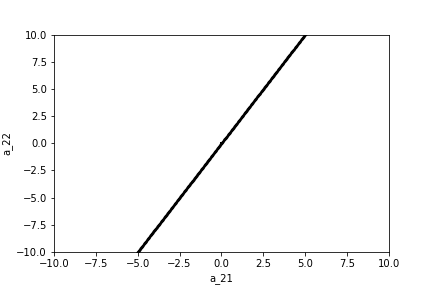}
  \hspace{1cm}
  \includegraphics[width=.45\linewidth]{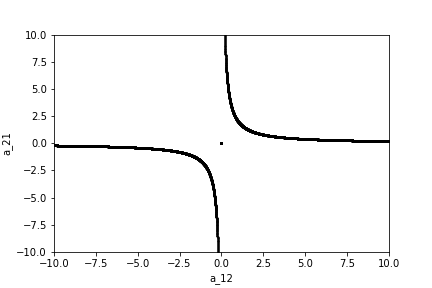}
  \caption{\textbf{Left}: The $M_\epsilon$ region (shaded area) for fixed $a_{11} = 1$, $a_{12} = 2$. \textbf{Right}: The $M_\epsilon$ region (shaded area) for fixed $a_{11} = 1$, $a_{22} = 2$.}
  \label{figure_22_2D}
\end{figure*}

\paragraph{Example 1: Fixed $a_{11} = 1$ and $a_{12} = 2$}

When $a_{11} = 1$ and $a_{12} = 2$ are fixed, $M_{\epsilon}$ consists of the $\epsilon$-neighborhoods around the line $a_{22} = 2a_{21}$ in the $(a_{21}, a_{22})$ plane. The shaded area in the left plot of Figure \ref{figure_22_2D} illustrates this $M_{\epsilon}$ region.

\paragraph{Example 2: Fixed $a_{11} = 1$ and $a_{22} = 2$}

When $a_{11} = 1$ and $a_{22} = 2$ are fixed, $M_{\epsilon}$ consists of the $\epsilon$-neighborhoods around the hyperbola $a_{12}a_{21} = 2$ in the $(a_{12}, a_{21})$ plane. The shaded area in the right plot of Figure \ref{figure_22_2D} illustrates this $M_{\epsilon}$ region.

\paragraph{Example 3: Fixed $a_{11} = 1$}

When only $a_{11} = 1$ is fixed, we can visualize the $M_{\epsilon}$ region in a 3D plot. The blue surface in Figure \ref{figure_22_3D} represents the set of matrices satisfying $\det(A) = 0$. The $M_{\epsilon}$ region consists of points near this surface. The yellow ball indicates a possible region for constructing the training data for accurate matrix inversion. 

\begin{figure}[!h]
  \centering
  \includegraphics[width=\linewidth]{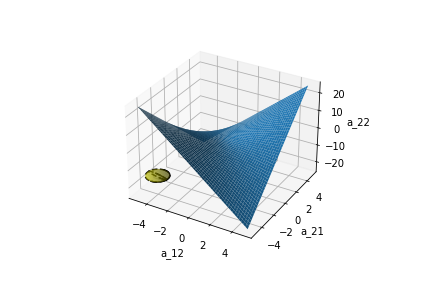}
  \caption{Blue surface: The $M_\epsilon$ region for fixed $a_{11} = 1$. Yellow ball: A training dataset area avoiding $M_\epsilon$.}
  \label{figure_22_3D}
\end{figure}

\subsection{Theoretical Analysis of Neural Network Approximation}\label{theoretical_model_design}

In this section, we theoretically describe how to design the coefficients of a neural network to approximate the matrix inversion in a specific region $M_0 \subset \mathbb{R}^{n\times n}$. The region $M_0$ is defined as:
\begin{equation}
M_0 = \prod_{i=1}^{n}\prod_{j=1}^{n}[A_{0,i,j} - c, \, A_{0,i,j} + c],
\end{equation}
where $A_0 \in \mathbb{R}^{n \times n}$ is a fixed nonsingular matrix, $A_{0,i,j}$ denotes the $(i,j)$th entry of $A_0$ and $c > 0$ is a constant. We assume that $M_0$ does not intersect with $M_\epsilon$, i.e., $M_0 \cap M_\epsilon = \emptyset$.

Let $A = A_0 + A'$ denote a matrix in $M_0$. Clearly, $A' \in \prod_{i=1}^{n}\prod_{j=1}^{n}[-c, \, c]$.  Consider the determinant formula:
\begin{equation}
    \operatorname{det}(A) = \sum\limits_{\sigma \in S_n} (\text{sgn}(\sigma)\prod_{i=1}^n A_{i, \sigma(i)})
\end{equation}
where $\text{sgn}$ denotes the permutation function, $S_n$ denotes the symmetric group of all such permutations, $\text{sgn}(\sigma)$ is $1$ if the permutation can be obtained from even number of exchanges of two entries, $-1$ otherwise.

For small perturbations $A'$, each element of the inverse matrix $(A_0 + A')^{-1}_{k,l}$ can be expressed as:
\begin{equation}\label{inv_formula_ori}
\small
\begin{split}
    (-1)^{k+l}\frac{\sum\limits_{\sigma \in S_{n-1}} (\text{sgn}(\sigma)\prod_{i=1}^{n-1} (A^{adj, k,l}_{0, i, \sigma(i)}+A'^{,adj, k,l}_{i,\sigma(i)}))}{\sum\limits_{\sigma \in S_n} (\text{sgn}(\sigma)\prod_{i=1}^n (A_{0,i, \sigma(i)}+A'_{i,\sigma(i)}))}
\end{split}
\end{equation}
where $A^{adj, k,l}_{0, i, \sigma(i)}$ denotes the $(i,\sigma(i))$th entry of the matrix after deleting row $k$ and column $l$ from $A_0$.

\begin{table*}[ht]
  \centering
  \small
  \caption{Average absolute error on the test set. Numbers in parentheses are standard deviations. The average and standard deviation are computed over 3 runs.}
  \label{experiment_main}
  \vskip.05in
  \renewcommand{\arraystretch}{1.3}
  \begin{tabular}{c|cc}
    \toprule
    Model & $2\times 2$ (First) & $2\times 2$ (Second)\\
    \midrule
    2-Fully Connected Layers & $2.05\times 10^{-5} \ (3.78\times 10^{-11})$ & $6.39\times 10^{-6} \ (3.72\times 10^{-12})$ \\
    3-Fully Connected Layers & $1.52\times 10^{-5} \ (4.31\times 10^{-11})$ & $7.37\times 10^{-6} \ (3.95\times 10^{-12})$ \\
    \midrule
    Model & $3\times 3$ Dataset & $16\times 16$ Dataset \\
    \midrule
    2-Fully Connected Layers & $8.77\times 10^{-5} \ (9.40\times 10^{-11})$ & $1.68\times 10^{-4} \ (3.19\times 10^{-10})$\\
    3-Fully Connected Layers & $1.53\times 10^{-4} \ (4.41\times 10^{-9})$ & $2.58\times 10^{-4} \ (8.18\times 10^{-11})$\\
    \bottomrule
  \end{tabular}
\end{table*}

Hence, we can show that 
\begin{equation}\label{2_layer_formula}
\begin{split}
(A_0 + A')^{-1}_{k,l} &= f^0_{k,l}(\{A_{0}\})+  \\
\sum_{i,j=1}^{n}&f^1_{k,l,i,j}(\{A_{0}\})A'_{i,j} + O(\{A'_{i,j}\}^2),
\end{split}
\end{equation}
where $f^{idx}_{k,l,i,j}(\{A_{0}\})$ represents a function of all elements in $A_0$, and $O(\{A'_{i,j}\}^2)$ represents the higher-order term.

As a result, we can design a neural network with 2 fully connected layers to approximate formula \ref{2_layer_formula} as:
\begin{equation}
    \begin{split}
        &h^{1+}_{k,l} = \text{ReLU}(\sum_{i,j=1}^{n}f^1_{k,l,i,j}(\{A_{0}\})A'_{i,j})\\
        &h^{1-}_{k,l} = \text{ReLU}(\sum_{i,j=1}^{n}(-f^1_{k,l,i,j}(\{A_{0}\}))A'_{i,j})\\
        &h^{2}_{k,l} = f^0_{k,l}(\{A_{0}\}) + 1 \times h^{1+}_{k,l} + 1 \times h^{1-}_{k,l}\\
    \end{split}
\end{equation}
and the error is $O(\{A'_{i,j}\}^2)$. When $A'$ is small enough, the error term decreases quadratically.

Therefore, for the design of a neural network-based end-to-end matrix inversion model in a specific region, with $n^2$ input elements, two fully connected layers with at least $2n^2$ hidden units can perform as well as the linear approximation.

\iffalse
\subsection{Theoretical Analysis of Coefficients in Neural-based Model}\label{theoretical_model_design}

In this section, we describe how to design the coefficients of the neural network to approximate the matrix inversion operation in a specific area $M_0 = \prod_{i=1,j=1}^{n}[A_{0,i,j} - c,A_{0,i,j} + c] \in \mathbb{R}^{n\times n}$ theoretically. We assume that $M_0\cap M_\epsilon = \emptyset$.
\fi

%%%%%%%%%%%%%%%%%%%%%%%%%%%%%%%%%%%%%%%%%%%%%%%%%%%%%%%%%%%%
\section{Experiments}
In this section, we present experiments that support our theoretical analysis. We train neural network models on four different matrix inversion datasets and present the results in Section \ref{experiment_result}. In Section \ref{experiment_theoretical}, we compare a two-layer model with a small number of hidden units to a linear approximation, demonstrating experimentally and theoretically what the models learn.

\subsection{Experiment Setup}
\textbf{Datasets}

We conducted experiments on matrices of various sizes: $2 \times 2$, $3 \times 3$, and $16 \times 16$. For quick verification and parameter tuning, we used small matrices ($2 \times 2$ and $3 \times 3$). To demonstrate that our statements hold for larger matrices, we experimented with $16 \times 16$ matrices.

For the $2 \times 2$ matrices, to validate our theorem regarding neighborhoods with no intersection with $M_{\epsilon}$, we generated two datasets centered at
\[
\begin{pmatrix}
2 & 2 \\
2 & 3
\end{pmatrix}
\quad \text{and} \quad
\begin{pmatrix}
2 & 1 \\
0 & -1
\end{pmatrix},
\]
respectively. Each dataset was constructed as $\prod_{i=1}^{2}\prod_{j=1}^{2}[A_{0,i,j} - 0.01, A_{0,i,j} + 0.01] \subset \mathbb{R}^{2 \times 2}$. The first dataset centers around a positive definite symmetric matrix, while the second centers around a general matrix. We denote them as $2 \times 2$ (First) and $2 \times 2$ (Second) datasets.

For $3 \times 3$ matrices, we generated a dataset centered at
\[
\begin{pmatrix}
1 & 1 & 1\\
1 & 2 & 3\\
1 & 2 & 4\\
\end{pmatrix}.
\]

For $16 \times 16$ matrices, we selected a non-singular matrix with elements sampled from $\{-2, -1, 0, 1, 2\}$ and generated the dataset by sampling around this matrix.

\textbf{Implementation Details}

We used neural networks with 2 or 3 fully connected layers and ReLU activation functions, setting hidden dimensions to several times the input dimension. The optimizer used was Adam, and the loss function was mean squared error (MSE). Hyperparameters were selected via grid search, and the learning rate warm restart technique \citep{warmrestart17} was applied. Details are provided in Appendices \ref{hyperparameter_appendix} and \ref{hyperparameter_appendix_theoretical}. Experiments were conducted on Nvidia GPUs.

\subsection{Results}\label{experiment_result}

We performed experiments demonstrating that neural networks with 2 or 3 fully connected layers can approximate matrix inversion. Table \ref{experiment_main} shows the average absolute error between the neural network output and the true inverse on the test set, averaged over 3 runs.

For $2 \times 2$ matrices, the neural network's inverse elements are approximately $10^{-5}$ away from the ground truth, indicating effective learning within the dataset's space. For $3 \times 3$ matrices, the error increases to $10^{-4}$, and for $16 \times 16$ matrices, the error is about $2 \times 10^{-4}$, which is still relatively small.

All the inference times of trained models are listed in Appendix \ref{inference_time_section}.

\subsection{Comparison with Linear Approximation}\label{experiment_theoretical}

We trained a small 2-layer model on the $2 \times 2$ (first) dataset and analyzed the model to understand what it learned about matrix inversion.

First, we introduce the linear approximation of matrix inversion around
$\begin{pmatrix}
2 & 2 \\
2 & 3
\end{pmatrix}$. Let the input matrix be in the form of$\begin{pmatrix}
2 & 2 \\
2 & 3
\end{pmatrix}+\begin{pmatrix}
a & b \\
c & d
\end{pmatrix}$, and the inversion has the form of $\begin{pmatrix}
1.5 & -1 \\
-1 & 1
\end{pmatrix}+\begin{pmatrix}
a_{11} & a_{12} \\
a_{21} & a_{22}
\end{pmatrix}$. From formula \ref{inv_formula_ori}, we can give the linear approximation of $a_{ij}$ as
\begin{equation}\label{linear_approxiamtion_formula}
    \begin{split}
        &a_{11}\approx - 2.25a+ 1.5b+ 1.5c- d\\
        &a_{12}\approx 1.5a- 1.5b-c+d\\
        &a_{21}\approx 1.5a- b-1.5c+d\\
        &a_{22}\approx  - a + b+ c-  d\\
    \end{split}
\end{equation}

Then, we compare the linear approximation method with the neural-network-based method. In Table \ref{experiment_compare_linear}, we compare the average absolute error on the test set for four different models. It is obvious that neural-network models have better performance than the traditional linear approximation method. 
\begin{table}[H]
  \small 
  %\centering
  \caption{Average absolute error on $2\times 2$ (First) test set for different models. For the deep learning model, the average is computed by 3 runs.}
  \label{experiment_compare_linear}
  % \vskip.05in
  %     \renewcommand{\arraystretch}{1.3}
  \begin{tabular}{c|c}
    \toprule
    Model & Average absolute error\\
    \midrule
    Linear Approximation & $1.97\times 10^{-4}$\\
    2-Fully Connect(small) & $6.82\times 10^{-5}$\\
    2-Fully Connect & $2.05\times 10^{-5}$\\
    3-Fully Connect & $1.52\times 10^{-5}$\\
    \bottomrule
  \end{tabular}
\end{table}

Because the neural network can be written as $W_2\text{ReLU}(W_1(a,b,c,d)^T+b_1)+b_2$, we try to compute what the formula represents in different spaces in $\mathbb{R}^4$. 
We first trained a 2-layers neural network and fixed the parameters after training. Then, we randomly sampled 1M data points in the dataset area, and found that $55.7\%$ of the sampled data located in the area $\{h_i>0|i \in \{1,4,5,6,7\}\}\cap \{h_i<0|i \in \{3\}\}$, where $h_i$ represents the hidden unit in layer 1 of the fixed neural network. If we eliminate the ReLU function for $h_i>0$ and discard the negative unit, the output $a_{ij}$ of the neural network in this area has the form
\begin{equation}\label{neural_network_approxiamtion_formula_1}
\begin{split}
\small
     a_{11} = & -2.3034*a + 1.5408*b  \\
              & + 1.5354*c- 1.0260*d - 0.0102 \\
     a_{12} =&  1.5392*a - 1.5324*b  \\
              & - 1.0302*c + 1.0241*d + 0.0081 \\
     a_{21} =& 1.5373*a - 1.0313*b  \\
              & - 1.5265*c + 1.0220*d + 0.0060\\
     a_{22} =& -1.0290*a + 1.0248*b   \\
              &+ 1.0215*c - 1.0180*d - 0.0049\\
\end{split}
\end{equation}
We find that the difference between each coefficient in formula ~\eqref{neural_network_approxiamtion_formula_1} and the corresponding coefficient in the linear approximation formula~\eqref{linear_approxiamtion_formula} is less than $0.06$, indicating they are nearly identical to the linear approximation. 

For other data $41.7\%$, located in $\{h_i>0|i \in \{1,3,4,5\}\}\cap \{h_i<0|i \in \{6,7\}\}$, we analyze the neural network in Appendix \ref{appendix_full_analysis}. We find that the difference between each coefficient in the formula and the linear approximation is less than $0.07$.
%the output $a_{ij}$ has the form
% \begin{equation}
% \begin{split}
% \small
%     a_{11} =& -2.1860*a + 1.4526*b   \\
%             & + 1.4583*c - 0.9698*d - 0.0229\\
%     a_{12} =&  1.4544*a - 1.4624*b   \\
%             & - 0.9641*c + 0.9717*d + 0.0174\\
%     a_{21} =&  1.4550*a - 0.9621*b  \\
%             & - 1.4679*c + 0.9733*d + 0.0167\\
%     a_{22} =& -0.9652*a + 0.9702*b  \\
%             & + 0.9741*c - 0.9783*d - 0.0131\\
% \end{split}
% \end{equation}

These two cases cover most of the sampled data ($97.4\%$), leading us to conclude that, in most of the dataset, the two-layer model essentially learns a refined linear approximation of $a_{ij}$. Appendix~\ref{appendix_full_analysis} provides a full analysis of all $a_{ij}$ in all cases and examines the neural network's properties for the remaining $2.6\%$ of data.

%%%%%%%%%%%%%%%%%%%%%%%%%%%%%%%%%%%%%%%%%%%%%%%%%%%%%%%%%%%%
\section{Discussion and Conclusions}

\subsection{Limitations}

First, our techniques for estimating the approximation error bounds cannot establish that the approximation error over a general set must be large. Estimating the bounds of the mean squared error in high-dimensional spaces when approximating matrix inversion with neural networks requires further analysis.

Second, due to limited computational resources, we did not experiment with very large matrices (e.g., $10{,}000 \times 10{,}000$) or deeper neural networks.

Finally, we did not investigate the performance of non-polynomial Lipschitz networks on the matrix inversion problem.

\subsection{Conclusions and future work}

We proved that most modern neural network structures cannot form a general matrix inversion model. To support our proofs, we defined a generalized Lipschitz function class that more accurately describes modern neural networks. 

Future work could explore the performance of non-polynomial Lipschitz networks on mathematical tasks and develop a more comprehensive function class encompassing more neural network structures to analyze their capabilities. We also identified regions where neural networks can effectively approximate matrix inversion, both theoretically and experimentally. This insight may help elucidate what black-box neural networks learn in specific tasks.

\section{Acknowledgements}
The Research of Y. Xi is supported by the National
Science Foundation (NSF) under Grant No. DMS-2338904.

\newpage
\bibliography{neural_based_matrix_inversion}
\bibliographystyle{plainnat}

%%%%%%%%%%%%%%%%%%%%%%%%%%%%%%%%%%%%%%%%%%%%%%%%%%%%%%%%%%%%
\section*{Checklist}

% %%% BEGIN INSTRUCTIONS %%%
%The checklist follows the references. For each question, choose your answer from the three possible options: Yes, No, Not Applicable.  You are encouraged to include a justification to your answer, either by referencing the appropriate section of your paper or providing a brief inline description (1-2 sentences). 
%Please do not modify the questions.  Note that the Checklist section does not count towards the page limit. Not including the checklist in the first submission won't result in desk rejection, although in such case we will ask you to upload it during the author response period and include it in camera ready (if accepted).

%\textbf{In your paper, please delete this instructions block and only keep the Checklist section heading above along with the questions/answers below.}
% %%% END INSTRUCTIONS %%%

 \begin{enumerate}

 \item For all models and algorithms presented, check if you include:
 \begin{enumerate}
   \item A clear description of the mathematical setting, assumptions, algorithm, and/or model. [Yes]
   \item An analysis of the properties and complexity (time, space, sample size) of any algorithm. [Yes]
   \item (Optional) Anonymized source code, with specification of all dependencies, including external libraries. [Yes]
 \end{enumerate}

 \item For any theoretical claim, check if you include:
 \begin{enumerate}
   \item Statements of the full set of assumptions of all theoretical results. [Yes]
   \item Complete proofs of all theoretical results. [Yes]
   \item Clear explanations of any assumptions. [Yes]     
 \end{enumerate}

 \item For all figures and tables that present empirical results, check if you include:
 \begin{enumerate}
   \item The code, data, and instructions needed to reproduce the main experimental results (either in the supplemental material or as a URL). [Yes]
   \item All the training details (e.g., data splits, hyperparameters, how they were chosen). [Yes]
         \item A clear definition of the specific measure or statistics and error bars (e.g., with respect to the random seed after running experiments multiple times). [Yes]
         \item A description of the computing infrastructure used. (e.g., type of GPUs, internal cluster, or cloud provider). [Yes]
 \end{enumerate}

 \item If you are using existing assets (e.g., code, data, models) or curating/releasing new assets, check if you include:
 \begin{enumerate}
   \item Citations of the creator If your work uses existing assets. [Not Applicable]
   \item The license information of the assets, if applicable. [Yes]
   \item New assets either in the supplemental material or as a URL, if applicable. [Not Applicable]
   \item Information about consent from data providers/curators. [Not Applicable]
   \item Discussion of sensible content if applicable, e.g., personally identifiable information or offensive content. [Not Applicable]
 \end{enumerate}

 \item If you used crowdsourcing or conducted research with human subjects, check if you include:
 \begin{enumerate}
   \item The full text of instructions given to participants and screenshots. [Not Applicable]
   \item Descriptions of potential participant risks, with links to Institutional Review Board (IRB) approvals if applicable. [Not Applicable]
   \item The estimated hourly wage paid to participants and the total amount spent on participant compensation. [Not Applicable]
 \end{enumerate}

 \end{enumerate}

%%%%%%%%%%%%%%%%%%%%%%%%%%%%%%%%%%%%%%%%%%%%%%%%%%%%%%%%%%%%%%%%%%%%%%%%%%%%%%%
%%%%%%%%%%%%%%%%%%%%%%%%%%%%%%%%%%%%%%%%%%%%%%%%%%%%%%%%%%%%%%%%%%%%%%%%%%%%%%%
% APPENDIX
%%%%%%%%%%%%%%%%%%%%%%%%%%%%%%%%%%%%%%%%%%%%%%%%%%%%%%%%%%%%%%%%%%%%%%%%%%%%%%%
%%%%%%%%%%%%%%%%%%%%%%%%%%%%%%%%%%%%%%%%%%%%%%%%%%%%%%%%%%%%%%%%%%%%%%%%%%%%%%%
\newpage
\appendix
\onecolumn
\section{Supplement proofs}\label{Appendix_proofs}

\subsection{Proof of Lemma \ref{lemma_composition_functions}}\label{Appendix_A_1}

Recall the lemma
\begin{lemma}
    Suppose we have two functions $f(x):\mathbb{R}^{n_2}\rightarrow \mathbb{R}^{n_3}$, $g(x):\mathbb{R}^{n_1}\rightarrow \mathbb{R}^{n_2}$. And, Under norm $L$ (either $L_1$, $L_2$ or $L_\infty$, defined on $\mathbb{R}^{n_i}$), $f,g$ are all polynomial Lipschitz continuous functions. Then the composition $f\circ g$ is also a polynomial Lipschitz continuous function.
\end{lemma}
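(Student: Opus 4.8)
The plan is to chase the two defining inequalities through the composition, controlling the norm-dependent polynomial coefficients by first bounding $\|g(x)\|_{L}$ in terms of $\|x\|_{L}$. First I would write out the hypotheses: there are constants $n^f, n^g$ and polynomials $f_i(s,t)$, $g_j(s,t)$ such that $\|f(u)-f(v)\|_{L} \le \sum_{i=0}^{n^f} f_i(\|u\|_L,\|v\|_L)\|u-v\|_L^i$ and $\|g(x)-g(y)\|_{L} \le \sum_{j=0}^{n^g} g_j(\|x\|_L,\|y\|_L)\|x-y\|_L^j$. Applying the first inequality with $u=g(x)$, $v=g(y)$ gives
\begin{equation}
\|f(g(x))-f(g(y))\|_{L} \le \sum_{i=0}^{n^f} f_i(\|g(x)\|_L,\|g(y)\|_L)\,\|g(x)-g(y)\|_L^i .
\end{equation}

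Next I would substitute the bound on $\|g(x)-g(y)\|_L$ into the factor $\|g(x)-g(y)\|_L^i$; since $g_j \ge 0$ and $\|x-y\|_L \ge 0$, raising the sum to the $i$-th power and expanding multinomially yields a finite sum of terms of the form $(\text{product of }g_j\text{'s})\cdot \|x-y\|_L^{m}$ with $0 \le m \le i\, n^g \le n^f n^g$, whose coefficients are polynomials in $\|x\|_L, \|y\|_L$. So the only remaining issue is the arguments $\|g(x)\|_L$ and $\|g(y)\|_L$ appearing inside $f_i$. Here I invoke Lemma~\ref{poly_lip_func_bounded}'s underlying idea more carefully: pick a fixed reference point $x_0 \in \mathbb{R}^{n_1}$; the polynomial Lipschitz inequality for $g$ gives $\|g(x)\|_L \le \|g(x_0)\|_L + \sum_{j=0}^{n^g} g_j(\|x\|_L,\|x_0\|_L)\|x-x_0\|_L^j$, and since $\|x-x_0\|_L \le \|x\|_L + \|x_0\|_L$, the right-hand side is a polynomial in $\|x\|_L$ (with $\|x_0\|_L$ absorbed as a constant). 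Call this polynomial $P(\|x\|_L)$; similarly $\|g(y)\|_L \le P(\|y\|_L)$.

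Substituting these into $f_i(\|g(x)\|_L,\|g(y)\|_L)$: because $f_i$ is a polynomial with nonnegative-degree monomials and $P$ is a polynomial with nonnegative coefficients in $\|x\|_L$, and because $f_i$ need not be monotone in its arguments, I would first dominate $f_i$ by a polynomial $\tilde f_i$ with nonnegative coefficients satisfying $|f_i(s,t)| \le \tilde f_i(s,t)$ for all $s,t\ge 0$ (replace each coefficient by its absolute value), then use $\tilde f_i(P(\|x\|_L), P(\|y\|_L))$, which is again a polynomial in $\|x\|_L, \|y\|_L$. Collecting everything, $\|f(g(x))-f(g(y))\|_L$ is bounded by a finite sum $\sum_{m=0}^{n^f n^g} h_m(\|x\|_L,\|y\|_L)\|x-y\|_L^m$ with each $h_m$ a polynomial, which is exactly the definition of polynomial Lipschitz continuity for $h=f\circ g$ with $n^h = n^f n^g$.

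The main obstacle — really the only subtle point — is handling the composition $f_i(\|g(x)\|_L, \|g(y)\|_L)$ cleanly: one must bound $\|g(x)\|_L$ by a polynomial in $\|x\|_L$ (using $g$'s polynomial Lipschitz property against a fixed base point) and then be careful that polynomials are not monotone, so the substitution step requires passing to coefficient-wise absolute values before composing. Everything else is bookkeeping: multinomial expansion of $(\sum_j g_j \|x-y\|^j)^i$ and noting that products and sums of polynomials in $\|x\|_L, \|y\|_L$ remain polynomials, and that the total exponent of $\|x-y\|_L$ stays bounded by the constant $n^f n^g$. I would also remark that the argument uses only that $L$ is one of $L_1, L_2, L_\infty$ insofar as these are genuine norms (triangle inequality for the base-point bound); no equivalence constants between norms are needed here since $f$ and $g$ are assumed polynomial Lipschitz under the same family.
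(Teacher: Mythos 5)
Your proposal is correct and follows essentially the same route as the paper's proof in Appendix A.1: bound $\|g(x)\|_L$ by a polynomial in $\|x\|_L$ via the polynomial Lipschitz inequality against a fixed base point (the paper uses $0$), substitute into $f$'s inequality, and expand to get $n^{f\circ g}=n^f n^g$. Your additional step of dominating each $f_i$ by its coefficient-wise absolute value before composing with the bound on $\|g(x)\|_L$ is a point the paper passes over silently (``from the property of polynomials''), so your version is if anything slightly more careful.
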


\begin{proof}

First we prove that any polynomial Lipschitz continuous function $g(x)$ is bounded by a polynomial of $\|x\|_{L^1}$ under $L^2$ norm in $\mathbb{R}^{n_2}$. Recall the definition
\begin{equation}
    \|f(x)-f(y)\|_{L^2}\leq \sum\limits_{i=0}^{n^f} f_i(\|x\|_{L^1}, \|y\|_{L^1}) \|x-y\|_{L^1}^i,
\end{equation}
we have
\begin{equation}
\begin{split}
    \|g(x)\|_{L^2} & \leq \|g(x)-g(0)\|_{L^2} + \|g(0)\|_{L^2}\\
    &\leq \sum\limits_{i=0}^{n^g} g_i(\|x\|_{L^1}, \|0\|_{L^1}) \|x\|_{L^1}^i +  \|g(0)\|_{L^2}\\
    &= \sum\limits_{i=0}^{n^g} g_i(\|x\|_{L^1}, 0) \|x\|_{L^1}^i +  \|g(0)\|_{L^2}\\
\end{split}
\end{equation}
where $g_i(\|x\|_{L^1}, 0)$ can downgrades to a polynomial with only one variable $\|x\|_{L^1}$. As a result, $\|g(x)\|_{L^2}$ is bounded by a polynomial of $\|x\|_{L^1}$.

Then consider 
\begin{equation}
\begin{split}
    \|f(g(x))-f(g(y))\|_{L}&\leq \sum\limits_{i=0}^{n^f} f_i(\|g(x)\|_{L^1}, \|g(y)\|_{L^1}) \|g(x)-g(y)\|_{L}^i\\
    &\leq \sum\limits_{i=0}^{n^f} f_i(\|g(x)\|_{L^1}, \|g(y)\|_{L^1}) (\sum\limits_{j=0}^{n^g} g_j(\|x\|_{L^1}, \|y\|_{L^1}) \|x-y\|_{L}^j)^i\\
    &\leq \sum\limits_{i=0}^{n^f} ploynomial^g_{f_i}(\|x\|_{L^1}, \|y\|_{L^1}) (\sum\limits_{j=0}^{n^g} g_j(\|x\|_{L^1}, \|y\|_{L^1}) \|x-y\|_{L}^j)^i\\
    &= \sum\limits_{i=0}^{n^f\times n^g} (f\circ g)_i(\|x\|_{L^1}, \|y\|_{L^1}) \|x-y\|_{L}^i
\end{split}
\end{equation}

where $(f\circ g)_i(\cdot,\cdot)$ can be calculated from $f_i(\cdot,\cdot)$, $g_i(\cdot,\cdot)$ and the upper bound polynomial of $\|g\|_{L^2}$. The last inequality are from the property of polynomials.
\end{proof}

\subsection{Proof of Lemma \ref{lemma_parallel_functions}}\label{Appendix_A_2}

Recall the lemma
\begin{lemma}
    Suppose we have two functions $f(x):\mathbb{R}^{n_1}\rightarrow \mathbb{R}^{n_2}$, $g(x):\mathbb{R}^{n_1}\rightarrow \mathbb{R}^{n_3}$. And, under norm $L$, either $L_1$, $L_2$ or $L_\infty$, defined on $\mathbb{R}^{n_i}$, $f,g$ are all polynomial Lipschitz continuous functions. Then the combination $(f,g)(x):\mathbb{R}^{n_1}\rightarrow \mathbb{R}^{n_2+n_3}$ is also a polynomial Lipschitz continuous function.
\end{lemma}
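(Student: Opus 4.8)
The plan is to prove Lemma~\ref{lemma_parallel_functions} by reducing the concatenation to a straightforward estimate using the equivalence of the $L_1$, $L_2$, $L_\infty$ norms in finite-dimensional spaces. The key observation is that for $x, y \in \mathbb{R}^{n_1}$, we can write $h(x) - h(y) = (f(x) - f(y),\, g(x) - g(y)) \in \mathbb{R}^{n_2 + n_3}$, and for any of the three standard norms the norm of a concatenated vector is controlled by (a constant times) the sum, or the max, of the norms of its two blocks. For instance, $\|h(x)-h(y)\|_{L_2}^2 = \|f(x)-f(y)\|_{L_2}^2 + \|g(x)-g(y)\|_{L_2}^2 \le (\|f(x)-f(y)\|_{L_2} + \|g(x)-g(y)\|_{L_2})^2$, with analogous trivial inequalities for $L_1$ (exact additivity) and $L_\infty$ (the max is bounded by the sum). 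Hence in all cases $\|h(x)-h(y)\|_L \le \|f(x)-f(y)\|_L + \|g(x)-g(y)\|_L$, possibly after absorbing a dimensional constant.

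First I would fix the norm $L$ and record the block inequality $\|h(x)-h(y)\|_L \le C\big(\|f(x)-f(y)\|_L + \|g(x)-g(y)\|_L\big)$ for an absolute constant $C$ depending only on the choice of norm (in fact $C=1$ suffices for $L_1$ and $L_\infty$, and $C=1$ also works for $L_2$ by the computation above). Next I would invoke the polynomial Lipschitz continuity of $f$ and of $g$ to bound each summand:
\begin{equation}
\|f(x)-f(y)\|_L \le \sum_{i=0}^{n^f} f_i(\|x\|_L, \|y\|_L)\|x-y\|_L^i, \qquad \|g(x)-g(y)\|_L \le \sum_{i=0}^{n^g} g_i(\|x\|_L, \|y\|_L)\|x-y\|_L^i.
\end{equation}
Adding these two and multiplying by $C$ yields a bound of the form $\sum_{i=0}^{\max(n^f, n^g)} h_i(\|x\|_L, \|y\|_L)\|x-y\|_L^i$, where each $h_i = C(f_i + g_i)$ (with the convention that $f_i$ or $g_i$ is the zero polynomial when $i$ exceeds the respective degree) is again a polynomial in $\|x\|_L, \|y\|_L$. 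Setting $n^h = \max(n^f, n^g)$, which is a constant depending only on $f$ and $g$, this is exactly the defining inequality of Definition~\ref{definition_2_2}, so $h$ is polynomial Lipschitz continuous.

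There is essentially no hard part here; the only mild subtlety is bookkeeping around the norm on the product space $\mathbb{R}^{n_2+n_3}$ versus the norms on $\mathbb{R}^{n_2}$ and $\mathbb{R}^{n_3}$ separately, which is resolved cleanly by the vectorized-norm convention already adopted in the paper together with the elementary block inequalities above. I would present the $L_2$ case in full since it is the only one requiring a one-line squared-norm computation, and remark that $L_1$ and $L_\infty$ are immediate. One could alternatively deduce the result from Lemma~\ref{lemma_composition_functions} by writing $h = (f \circ \pi_1,\, g \circ \pi_1)$ composed appropriately, but the direct argument is shorter and avoids introducing projection maps.
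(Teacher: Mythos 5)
Your proposal is correct and follows essentially the same route as the paper's own proof: bound the norm of the concatenated difference by the sum of the block norms, then add the two polynomial Lipschitz estimates termwise. The only difference is that you spell out the block inequality for each of the three norms, which the paper takes as immediate.
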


\begin{proof}
\begin{equation}
\begin{split}
    \|(f,g)(x)- (f,g)(y)\|_{L}& = \|(f(x)-f(y), g(x)-g(y))\|_{L}\\
    &\leq \|f(x)-f(y)\|_{L} + \|g(x)-g(y)\|_{L}\\
    &\leq \sum\limits_{i=0}^{n^f}  f_i(\|x\|_{L^1}, \|y\|_{L^1}) \|x-y\|_{L}^i + \sum\limits_{i=0}^{n^g}  g_i(\|x\|_{L^1}, \|y\|_{L^1}) \|x-y\|_{L}^i\\
    &= \sum\limits_{i=0}^{n^{(f,g)}}(f_i+g_i)(\|x\|_{L^1}, \|y\|_{L^1}) \|x-y\|_{L}^i.
\end{split}
\end{equation}
\end{proof}

\subsection{Proof of Lemma \ref{lemma_lipschitz_Jacobian}}\label{Appendix_A_Jacobian}
Recall the Lemma
\begin{lemma}
    Suppose function $f(x):\mathbb{R}^{n_1}\rightarrow \mathbb{R}^{n_2}$ and Jacobian of $f$ exists everywhere. If the value of each element of the Jacobian is bounded by a polynomial of $\|x\|_L$, $f$ is a polynomial Lipschitz continuous function.
\end{lemma}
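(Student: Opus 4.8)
The plan is to bound $\|f(x)-f(y)\|_L$ by integrating the Jacobian along the straight-line segment joining $x$ and $y$, and then to estimate that integral using the polynomial bound on the Jacobian entries together with the fact that points on the segment have norm controlled by $\|x\|_L$ and $\|y\|_L$. First I would write, for $x,y\in\mathbb{R}^{n_1}$, the fundamental theorem of calculus applied to $t\mapsto f(x+t(y-x))$ on $[0,1]$, giving
\begin{equation}
f(y)-f(x)=\int_0^1 J_f\bigl(x+t(y-x)\bigr)\,(y-x)\,dt,
\end{equation}
where $J_f$ is the Jacobian matrix (this is valid since $J_f$ exists everywhere; I would note that the paper is implicitly assuming enough regularity, e.g. continuity of $J_f$ or absolute continuity along lines, to justify this — in the write-up I would just invoke it). Taking norms and using the triangle inequality for integrals,
\begin{equation}
\|f(y)-f(x)\|_L\le \int_0^1 \bigl\|J_f\bigl(x+t(y-x)\bigr)(y-x)\bigr\|_L\,dt.
\end{equation}

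Next I would control the integrand. Since each entry of $J_f$ is bounded by a polynomial in $\|\cdot\|_L$, the operator norm of $J_f(z)$ (induced by the chosen $L$ norm, for which the relevant matrix-norm inequalities hold uniformly up to dimension-dependent constants) is bounded by some polynomial $P(\|z\|_L)$, so $\|J_f(z)(y-x)\|_L\le P(\|z\|_L)\,\|y-x\|_L$. For $z=x+t(y-x)$ with $t\in[0,1]$ we have $\|z\|_L\le (1-t)\|x\|_L+t\|y\|_L\le \|x\|_L+\|y\|_L$, so $P(\|z\|_L)\le P(\|x\|_L+\|y\|_L)=:Q(\|x\|_L,\|y\|_L)$, a polynomial in the two scalar variables $\|x\|_L,\|y\|_L$. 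Substituting back,
\begin{equation}
\|f(y)-f(x)\|_L\le Q(\|x\|_L,\|y\|_L)\,\|y-x\|_L,
\end{equation}
which is exactly the form required by Definition \ref{definition_2_2} (indeed with $n^f=1$, only the $i=0$ and $i=1$ terms, here just $i=1$). I would also remark that the norms $L^{+}$ and $L^{*}$ here coincide with $L$, which is covered by the hypothesis that the same family of standard norms is used on all the spaces.

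The main obstacle is the passage from "each Jacobian entry is polynomially bounded" to "the Jacobian, acting as a linear map, is polynomially bounded in operator norm" — this is where one must invoke the equivalence of the entrywise (vectorized) norm and the induced operator norm in finite dimensions, picking up a constant depending only on $n_1,n_2$, which is harmless since these are fixed. A secondary technical point is justifying the integral representation itself: strictly, mere existence of the Jacobian everywhere does not give the fundamental theorem of calculus without some additional hypothesis (continuity of $J_f$, or local integrability along lines). In the proof I would either assume the natural regularity present in all the neural-network components under consideration (piecewise-smooth maps), or restrict to almost-everywhere differentiable Lipschitz-on-compacts behavior and apply the result on each smooth piece; for the purposes of this paper the componentwise structure of the networks makes this routine, so I would state it briefly rather than belabor it.
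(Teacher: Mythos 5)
Your argument is sound in outline but takes a genuinely different route from the paper, and the difference lands exactly on the technical point you flag yourself. The paper does not integrate the Jacobian along the segment: it works component by component, sets $f_{k,xy}(t)=f_k(tx+(1-t)y)$, and applies the one-dimensional Mean Value Theorem to each scalar function, so that $f_k(x)-f_k(y)=f'_{k,xy}(\epsilon_k)$ for some $\epsilon_k\in(0,1)$. It then bounds each $|f'_{k,xy}(\epsilon_k)|$ using the polynomial bound on the Jacobian entries, substitutes $\|\epsilon_k x+(1-\epsilon_k)y\|\le\|y\|+\epsilon_k\|x-y\|$ into that polynomial, and expands in powers of $\|x-y\|$ before assembling the components under the $L^2$ norm. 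The payoff of that route is precisely that the scalar MVT needs only differentiability of $f_{k,xy}$ on $(0,1)$ and continuity on $[0,1]$, which follows from the stated hypothesis that the Jacobian exists everywhere; your fundamental-theorem-of-calculus representation additionally requires the derivative along the segment to be integrable, which mere everywhere-differentiability does not supply (a derivative need not be Riemann integrable), so your proof as written needs the extra regularity you propose to assume, whereas the paper's does not. The two routes also package the final bound differently: you absorb the segment norms into a single coefficient $Q(\|x\|_L,\|y\|_L)$ and obtain the form of Definition~\ref{definition_2_2} with $n^f=1$, while the paper keeps the expansion in powers of $\|x-y\|$ and obtains a sum up to $\max_i F_i$; both are admissible instances of the definition, and yours is arguably tidier. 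One further remark in your favor: your version carries the chain-rule factor $(y-x)$ explicitly through the estimate, whereas the paper's displayed chain of inequalities bounds $|f'_{k,xy}(\epsilon_k)|$ directly by a polynomial in $\|\epsilon_k x+(1-\epsilon_k)y\|$ without exhibiting that factor; since Definition~\ref{definition_2_2} permits an $i=0$ term this does not invalidate the paper's conclusion, but your accounting of where $\|x-y\|$ enters is the more careful one. If you switch your first step from the integral representation to the componentwise Mean Value Theorem, your proof matches the paper's and no auxiliary regularity assumption is needed.
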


\begin{proof}
    Here, $L$ represents either $L_1, L_2$ or $L_{\infty}$. We represent $f(x)$ as $f(x)=(f_1(x), f_2(x),...,f_{n_2}(x))$, where $f_i(x):\mathbb{R}^{n_1}\rightarrow \mathbb{R}$. Define $f_{k,xy}(t) = f_k(tx+(1-t)y)$ for $0\leq t\leq 1$, which is a continuous function. From the definition of $f(x)$, $f_{k,xy}(t)$ has derivative, and is bounded by a polynomial of $\|tx+(1-t)y\|_L$.
    
    Then we have
    \begin{equation}
        \begin{split}
            \|f(x)-f(y)\|_{L^2}&=\|(|f_1(x)-f_1(y)|, |f_2(x)-f_2(y)|,..., |f_{n_2}(x)-f_{n_2}(y)|) \|_{L^2}\\ 
            &= \|(|f_{1, xy}(1) - f_{1, xy}(0)|,|f_{2, xy}(1) - f_{2, xy}(0)|,..., |f_{{n_2}, xy}(1) - f_{k, xy}(0)|  \|_{L^2}\\
            &= \|(|f'_{1,xy}(\epsilon_1)|, |f'_{2,xy}(\epsilon_2)|,..., |f'_{{n_2},xy}(\epsilon_{n_2}))| \|_{L^2} \ (Mean \ Value \ Theorem)\\
            &\leq \|(poly_{Jacobian, 1}(\|\epsilon_1 x+(1-\epsilon_1)y\|_{L^1}), ..., poly_{Jacobian, {n_2}}(\|\epsilon_{n_2} x+(1-\epsilon_{n_2})y\|_{L^1}))\|_{L^2}\\
            &\leq \|(poly_{Jacobian, 1}(\|y\|_{L^1} + \epsilon_1\|y-x\|_{L^1}), ..., poly_{Jacobian, {n_2}}(\|y\|_{L^1} + \epsilon_{n_2}\|y-x\|_{L^1}))\|_{L^2}\\
            &\leq \|(\sum\limits_{i=0}^{F_1} poly_{Jacobian, 1, i}(\|y\|_{L^1}) \|y-x\|^i_{L^1}, ..., \sum\limits_{i=0}^{F_{n_2}} poly_{Jacobian, {n_2}, i}(\|y\|_{L^1}) \|y-x\|^i_{L^1})\|_{L^2}\\
            &\leq \sum\limits_{i=0}^{max(\{F_i\})} \|(poly_{Jacobian, 1, i}(\|y\|_{L^1}), ..., poly_{Jacobian, {n_2}, i}(\|y\|_{L^1}))\|_{L^2} \|y-x\|^i_{L^1}\\
            &\leq \sum\limits_{i=0}^{max(\{F_i\})} poly^f_{Jacobian, i}(\|y\|_{L^1}) \|y-x\|^i_{L^1}\\
        \end{split}
    \end{equation}
\end{proof}

\subsection{Proof of Lemma \ref{lemma_lipschitz_neural_network}}\label{Appendix_A_3}

Recall the lemma
\begin{lemma}
    Some modern widely used neural structures are polynomial Lipschitz continuous functions.
\end{lemma}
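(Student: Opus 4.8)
The plan is to verify the polynomial Lipschitz property componentwise for the standard building blocks of modern networks and then invoke the closure results already established. The organizing principle is that Lemma~\ref{lemma_composition_functions} and Lemma~\ref{lemma_parallel_functions} reduce everything to a finite list of primitive layers, and Lemma~\ref{lemma_lipschitz_Jacobian} reduces each primitive layer to a bound on its Jacobian entries by a polynomial in $\|x\|_L$. So the proof is essentially a catalogue: for each layer type, exhibit such a Jacobian bound (or cite known Lipschitz continuity, which is the degenerate case $n^f=1$ with constant coefficients).

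First I would handle the trivially Lipschitz pieces: affine maps (fully connected and convolutional layers) have constant Jacobian, and the usual activations (ReLU, sigmoid, tanh, leaky ReLU, ELU, softplus, GELU) have bounded derivatives, hence are $1$-Lipschitz-type and a fortiori polynomial Lipschitz; here I simply cite \citep{lipschitz21Kim}. Next I would treat layers with polynomial nonlinearities — neural spline / polynomial activation layers from \citep{neuralspline19, autm22cai} — where the map is elementwise a fixed-degree polynomial, so each Jacobian entry is a polynomial in the coordinates and therefore bounded by a polynomial in $\|x\|_L$; Lemma~\ref{lemma_lipschitz_Jacobian} applies directly. Then the key nontrivial case: multi-head dot-product attention. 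Following the Jacobian computation in \citep{lipschitz21Kim}, each entry of the Jacobian of the attention map is a rational-looking expression that, after the softmax normalization is accounted for, is dominated by a polynomial in the entries of the input $X$ (the softmax weights are bounded by $1$, and the remaining factors are the value/key/query linear images, which are linear in $X$); hence each Jacobian entry is polynomially bounded in $\|X\|_L$ and Lemma~\ref{lemma_lipschitz_Jacobian} gives polynomial Lipschitz continuity. Residual connections $x\mapsto x + g(x)$ are handled by Lemma~\ref{lemma_parallel_functions} together with the identity map (itself $1$-Lipschitz) and a final sum, or directly since the Jacobian is $I + Jg$. Layer normalization requires a small separate argument — one restricts to inputs bounded away from the zero-variance set, or notes its Jacobian entries are bounded polynomially on the relevant domain. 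Finally, a transformer block is a composition of these primitives, so Lemma~\ref{lemma_composition_functions} closes the argument, with the caveat (already noted in the lemma statement) that the conclusion is contingent on the chosen activation being polynomial Lipschitz.

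The main obstacle is the attention layer: unlike the affine and activation layers, its polynomial Lipschitz constant genuinely depends on $\|X\|_L$ and the argument must carefully track how the softmax denominator interacts with large inputs. I would lean on the explicit Jacobian bounds already derived in \citep{lipschitz21Kim} rather than rederive them, and just observe that their bounds are polynomial in $\|X\|_L$, which is exactly the hypothesis of Lemma~\ref{lemma_lipschitz_Jacobian}. A secondary subtlety is that layer normalization and similar normalization layers are not globally polynomial Lipschitz (they blow up near zero-variance inputs), so the honest statement is that on any dataset $M$ bounded away from these degeneracies — which is the regime of interest throughout the paper — they are polynomial Lipschitz; I would state this restriction explicitly rather than claim more than is true.

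Since the excerpt ends exactly at the re-statement of Lemma~\ref{lemma_lipschitz_neural_network} in the appendix, the appendix proof itself would consist of the catalogue above written out: one short paragraph per layer type, each ending in an invocation of Lemma~\ref{lemma_lipschitz_Jacobian}, Lemma~\ref{lemma_composition_functions}, or Lemma~\ref{lemma_parallel_functions}, together with the explicit polynomial Jacobian bounds for the spline and attention layers. No deep new idea is required beyond what the earlier lemmas provide; the content is in checking that each standard component does fall under one of the three closure mechanisms.
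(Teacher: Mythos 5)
Your proposal is correct and follows essentially the same route as the paper's proof: a layer-by-layer catalogue in which affine maps and standard activations are cited as Lipschitz, spline/polynomial layers and attention are handled via polynomially bounded Jacobian entries through Lemma~\ref{lemma_lipschitz_Jacobian}, residual blocks are treated directly, and full architectures are assembled with Lemmas~\ref{lemma_composition_functions} and~\ref{lemma_parallel_functions}, with the same caveat that transformers inherit the property from their activations. Your extra remarks on layer normalization and on restricting the domain away from degeneracies go slightly beyond what the paper writes down, but they do not change the argument.
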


\begin{proof}

We list the most widely used neural network structures below:
\begin{itemize}
    \item Fully Connect Layer: The formula is $Wx+b$, obviously, it is a Lipschitz continuous function, which is also a polynomial Lipschitz continuous.
    \item FCN Layer, CNN Layer, Non-linearities (relu, sigmoid, tanh): Under the choices of our norm, they are Lipschitz functions\citep{lipschitz21Kim}. Hence, they are polynomial Lipschitz continuous functions.
    \item Neural Spline Layer: In \citep{neuralspline19, autm22cai}, they introduce element-wise polynomial layers containing quadratic term and cubic term. For $f(x)=x^n$, obviously we have 
    \begin{equation}
    \begin{split}
        |f(x)-f(y)|&=|x^n-y^n| = |x-y\|\sum\limits_{i=0}^{n-1}x^iy^{n-1-i}|\\
        &\leq (\sum\limits_{i=0}^{n-1}|x|^i|y|^{n-1-i})|x-y|,
    \end{split}
    \end{equation}
    As a result, these layers are polynomial Lipschitz continuous functions.
    \item Residual structure: It has the form of $y = x+f(x)$. If $f(x)$ is polynomial Lipschitz function, satisfying $\|f(x_1)-f(x_2)\|_{L^2}\leq \sum\limits_{i=0}^{n^f} f_i(\|x_1\|_{L^1}, \|x_2\|_{L^1}) \|x_1-x_2\|_{L^1}^i$, then $\|y_1-y_2\|_{L^2}=\|x_1 - x_2 + f(x_1)-f(x_2)\|_{L^2}\leq \sum\limits_{i=0}^{n^f} f_i(\|x_1\|_{L^1}, \|x_2\|_{L^1})  \|x_1-x_2\|_{L^1}^i + \|x_1-x_2\|_{L^1}$.
    \item RNN, LSTM Unit: RNN unit is a combination of matrix multiplication, tanh function, and softmax function. Because they are all Lipschitzable, RNN is a Lipschitz function. Because the LSTM unit is a combination of matrix multiplication, tanh function, and activation functions, its polynomial Lipschitzable depends on the polynomial Lipschitz continuity of the used activation functions.
    \item Attention Layer: 
    %Previous research \citep{lipschitz21Kim} showed that multi-head dot-product attention is not Lipschitzable. However, when we investigate the Jacobian matrix in their paper, the entry of the Jacobian is $J_{ij} =  X^T P^{(i)}[E_{ji}XA^T+XA\delta_{ij}]+P_{ij}I$. Although it can be extremely large for large $\|X\|_{L^p}$, the entry is bounded by a polynomial of the quadratic term $\|X\|^2_{L^p}$. As a result, it is a polynomial Lipschitz function.
    We investigate the Jacobian matrix of the multi-head dot-product attention layer in the paper\citep{lipschitz21Kim}. In the Jacobian matrix, the element is $J_{ij} =  X^T P^{(i)}[E_{ji}XA^T+XA\delta_{ij}]+P_{ij}I$. Although it can be extremely large for large $\|X\|_{L^p}$, the entry is bounded by a polynomial $\|X\|_{L^p}$. As a result, it is a polynomial Lipschitz function.
    \item Transformer Layer: It is a combination of matrix multiplication, residual blocks, MLP blocks, Multi-head Attention, and activation functions. As a result, its polynomial Lipschitzable depends on the polynomial Lipschitz continuity of the selected activation functions.
    
\end{itemize}

\end{proof}

\subsection{Proof of Lemma \ref{lemma_3.6}}\label{Appendix_lemma_3.6_proof}

Recall the lemma 
\begin{lemma}
    Denote the matrix inversion function as $\text{Inv}(x)$. Suppose $A_0 \in \mathbb{R}^{n\times n}$ is a singular matrix with rank $n-1$, $B(A_0, \delta)$ is a ball centered at $A_0$ with radius $\delta$ in $\mathbb{R}^{n\times n}$. Denote $S_B$ as the set of all singular matrices in $B(A_0, \delta)$. Then, we can find a $\delta$ satisfies that for any matrix $A$ in $B(A_0, \delta) \setminus  S_B$, $\|\text{Inv}(A)\|_L>\frac{C_{A_0}}{\|A-A_0\|_L}$, where $C_{A_0}$ is a constant.
\end{lemma}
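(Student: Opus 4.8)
The plan is to use the adjugate (cofactor) formula for the inverse, $\text{Inv}(A) = \frac{1}{\det(A)}\,\mathrm{adj}(A)$, and show that near $A_0$ the determinant vanishes to first order while the adjugate stays bounded away from zero. First I would observe that $\det$ is a polynomial in the entries of $A$, hence smooth, so on any ball $B(A_0,\delta)$ we have a Lipschitz bound $|\det(A)| = |\det(A) - \det(A_0)| \le L_{\det}\,\|A - A_0\|_L$, where $L_{\det}$ depends on $\delta$ and $A_0$ (we may fix $\delta = 1$, say, to pin down $L_{\det}$). This is what produces the $1/\|A-A_0\|_L$ blow-up in the denominator.

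Next I would handle the numerator. Since $A_0$ has rank exactly $n-1$, at least one of its $(n-1)\times(n-1)$ minors is nonzero; equivalently $\mathrm{adj}(A_0) \ne 0$, so $\|\mathrm{adj}(A_0)\|_L = c_0 > 0$. Because each entry of $\mathrm{adj}(A)$ is again a polynomial in the entries of $A$, the map $A \mapsto \mathrm{adj}(A)$ is continuous, so there is a $\delta > 0$ with $\|\mathrm{adj}(A)\|_L \ge c_0/2$ for all $A \in B(A_0,\delta)$. Shrinking $\delta$ if necessary so that the Lipschitz estimate on $\det$ also holds with its constant $L_{\det}$ on this ball, I combine the two bounds: for $A \in B(A_0,\delta)\setminus S_B$,
\[
\|\text{Inv}(A)\|_L = \frac{\|\mathrm{adj}(A)\|_L}{|\det(A)|} \ge \frac{c_0/2}{L_{\det}\,\|A - A_0\|_L} = \frac{C_{A_0}}{\|A - A_0\|_L},
\]
with $C_{A_0} = c_0/(2 L_{\det})$, which is the claimed inequality. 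One should also note $C_{A_0} > 0$ since $c_0 > 0$ and $L_{\det} < \infty$; and if $\det(A_0)$ were somehow handled as nonzero we would contradict singularity, so the estimate is only asserted on the complement of $S_B$, where $\det(A)\ne 0$ and the quotient is defined.

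The main obstacle — really the only subtle point — is verifying that $\mathrm{adj}(A_0) \ne 0$, i.e. that the rank-$(n-1)$ hypothesis is exactly what is needed: rank $\le n-2$ would kill all $(n-1)$-minors and the argument would fail, while rank $n$ is excluded since $A_0$ is singular. Given the rank is precisely $n-1$, standard linear algebra guarantees some $(n-1)\times(n-1)$ submatrix is invertible, hence some cofactor is nonzero. A secondary bookkeeping issue is making the choice of $\delta$ consistent: it must be small enough for both the continuity estimate on $\mathrm{adj}$ and for the norm-equivalence constants relating $\|\cdot\|_{L_1},\|\cdot\|_{L_2},\|\cdot\|_{L_\infty}$ to be absorbed into $C_{A_0}$, but this is routine since all norms on $\mathbb{R}^{n\times n}$ are equivalent with dimension-dependent constants.
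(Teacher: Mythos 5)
Your proposal is correct and follows essentially the same route as the paper's proof: bound $|\det(A)|=|\det(A)-\det(A_0)|$ above by a Lipschitz constant times $\|A-A_0\|_L$, use the rank-$(n-1)$ hypothesis to get $\mathrm{adj}(A_0)\neq 0$ and continuity of the adjugate to keep $\|\mathrm{adj}(A)\|_L$ bounded below by $\|\mathrm{adj}(A_0)\|_L/2$ on a small ball, then divide. The only cosmetic difference is that the paper phrases the adjugate lower bound via the triangle inequality $\|\mathrm{Adj}(A)\|_L \geq \|\mathrm{Adj}(A_0)\|_L - \|\mathrm{Adj}(A)-\mathrm{Adj}(A_0)\|_L$ rather than stating the $c_0/2$ bound directly, and it obtains the strict inequality by using strict estimates throughout; your constant $C_{A_0}=c_0/(2L_{\det})$ matches theirs exactly.
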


\begin{proof}
We analyse the matrix inversion by the formula
\begin{equation}
    A^{-1} = \frac{1}{det(A)}Adj(A)
\end{equation}
where $Adj(A)$ represents the adjugate matrix, and the $(i,j)-th$ element of $Adj(A)$ is $(-1)^{i+j}$ times the determinant of the $(n-1) \times (n-1)$ matrix that results from deleting row $j$ and column $i$ of $A$.

Because $det(A)=\sum_{\substack{\sigma \in S_n}} (sgn(\sigma)\prod _{i=1}^{n}a_{i,\sigma (i)}))$, the determinant of $A$ is the linear combination of the multiplication of specific elements. Hence, for a small enough positive number $\delta^{A_0}$, there exists a $C_{A_0}^{det}$ satisfies that for any $A \in B(A_0, \delta^{A_0})$,  $|det(A)-det(A_0)| < C_{A_0}^{det}\|A-A_0\|_{L_{\infty}}$. 

Because $\|A-A_0\|_{L_{\infty}} \leq \|A-A_0\|_{L_2}$, and $\|A-A_0\|_{L_{\infty}} \leq \|A-A_0\|_{L_1}$, we can simply state that $|det(A)-det(A_0)| < C_{A_0}^{det}\|A-A_0\|_{L}$.
Here, $C_{A_0}^{det}$ is a constant calculated from $A_0$. Because $A_0$ is a singular matrix, we have $det(A_0) =0$. As a result, in $B(A_0, \delta^{A_0})$,  $|det(A)|<C_{A_0}^{det}\|A-A_0\|_L$.

Under the $L-$ norm (either $L_1$, $L_2$, $L_\infty$), because the element of $Adj(A)$ is the determinant of the sub-matrix, it is easy to prove for any small positive number $\epsilon$, we can find a $\delta^{Adj}(\epsilon)$ satisfies that for any $A \in B(A_0, \delta^{Adj}(\epsilon))$,  $\|Adj(A)-Adj(A_0)\|_L<\epsilon$.

It is assumed that $A_0$ is rank $n-1$, which means that $Adj(A_0)$ is not zero-matrix. Hence, $\|Adj(A_0)\|_L>0$.

Set $\delta = min(\delta^{Adj}(\|Adj(A_0)\|_L/2), \delta^{A_0})$ and $C_{A_0}=\frac{\|Adj(A_0)\|_L}{2C_{A_0}^{det}}$. Then, for any  $A \in B(A_0, \delta) \setminus  S_B$,
\begin{equation}
    \begin{split}
        \|A^{-1}\|_L &= \frac{1}{|det(A)|}\|Adj(A)\|_L\\
        &\geq \frac{1}{|det(A)|}( \|Adj(A_0)\|_L-\|Adj(A)-Adj(A_0)\|_L)\\
        &> \frac{1}{2|det(A)|} \|Adj(A_0)\|_L\\
        &> \frac{1}{2C_{A_0}^{det}\|A-A_0\|_L} \|Adj(A_0)\|_L\\
        &=\frac{C_{A_0}}{\|A-A_0\|_L}\\
    \end{split}
\end{equation}
\end{proof}

\subsection{Proof for evaluation function $\mathbb{E}_M(\frac{\|\text{Inv}(x) - F(x)\|^K_L}{\|x\|^{K'}_L})$}

\begin{theorem}\label{main_theorem_math_2}
    Suppose the data is sampled from a dataset $M$ in $\mathbb{R}^{n\times n}$, with no singular matrix contained, and $B(\vec{a}, c)$ is contained in the set, which is a ball area and $\vec{a}$ in a data point, $c$ is a sufficiently large number. Then, under either $L_1$ norm, $L_2$ norm, or $L_{\infty}$ norm metric, for any polynomial Lipschitz continuous function $F(x)$, $\mathbb{E}_{M}(\frac{\|\text{Inv}(x) - F(x)\|^K_L}{\|x\|^{K'}_L})=+\infty$ under Lebesgue measurement if $K\geq  n^2$.
\end{theorem}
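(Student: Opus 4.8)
The plan is to mimic the argument of Theorem~\ref{main_theorem_general} almost verbatim, with the only new ingredient being control of the extra denominator factor $\|x\|_L^{K'}$ near a singular matrix of rank $n-1$. First I would invoke the hypothesis that $B(\vec a, c) \subset M$ with $c$ sufficiently large to locate a singular matrix $A_0 \in M$ of rank exactly $n-1$, together with a ball $B(A_0,\delta_0)$ whose intersection with $M$ still lies inside $M$; this is exactly the setup already used in Theorems~\ref{main_theorem_mean} and~\ref{main_theorem_general}.

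Next I would apply Lemma~\ref{lemma_3.6} to get $\delta>0$ such that $\|\text{Inv}(A)\|_L > C_{A_0}/\|A-A_0\|_L$ on $B(A_0,\delta)\setminus S_B$, and Lemma~\ref{poly_lip_func_bounded} to bound $\|F(x)\|_L$ by a constant $C_F$ on $B(A_0,\delta)$, so that on a small enough ball $B(A_0,\epsilon)$ one has $\|\text{Inv}(x)-F(x)\|_L > C_{A_0}/(2\|x-A_0\|_L)$, just as before. The genuinely new step is the denominator: since $A_0$ is a fixed nonsingular—wait, $A_0$ is singular—but in any case $\|x\|_L$ is continuous and hence bounded above by some constant $C_x := \|A_0\|_L + \epsilon$ on $B(A_0,\epsilon)$. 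Therefore
\begin{equation}
\frac{\|\text{Inv}(x)-F(x)\|_L^K}{\|x\|_L^{K'}} > \frac{1}{C_x^{K'}}\left(\frac{C_{A_0}}{2\|x-A_0\|_L}\right)^K
\end{equation}
on $B(A_0,\epsilon)$. Restricting the expectation over $M$ to the subball $M_\epsilon = B(A_0,\epsilon)$ and passing to polar (hyperspherical) coordinates in $\mathbb{R}^{n^2}$ centered at $A_0$, the integral is bounded below by a positive constant times $\int_0^\epsilon r^{n^2-1-K}\,dr$, which diverges precisely when $K \ge n^2$. Hence $\mathbb{E}_M\big[\|\text{Inv}(x)-F(x)\|_L^K/\|x\|_L^{K'}\big] = +\infty$.

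The only mild subtlety—and the place I expect to spend the most care—is ensuring the denominator cannot itself blow up or vanish and spoil the bound: $\|x\|_L$ stays bounded on the compact ball $\overline{B(A_0,\epsilon)}$, and it is bounded away from zero there unless $A_0 = 0$, but $A_0$ has rank $n-1 \ge 1$ so $A_0 \neq 0$ and we may shrink $\epsilon$ so that $0 \notin \overline{B(A_0,\epsilon)}$; thus $\|x\|_L^{K'}$ is pinched between two positive constants and the divergence is driven entirely by the numerator's $r^{-K}$ singularity. The norm-equivalence remarks ($\|x\|_{L_\infty}\le\|x\|_{L_2}$, $\|x\|_{L_1}\le n\|x\|_{L_2}$) handle the passage between the three admissible norms exactly as in Theorem~\ref{main_theorem_mean}. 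Everything else is the routine $r^{n^2-1-K}$ integral computation already carried out above, so no new machinery is needed.
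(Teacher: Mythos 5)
Your proposal is correct and follows essentially the same route as the paper's proof: localize at a rank-$(n-1)$ singular matrix $A_0$ inside the large ball, lower-bound $\|\text{Inv}(x)-F(x)\|_L$ by $C_{A_0}/(2\|x-A_0\|_L)$ via Lemma~\ref{lemma_3.6} and the boundedness of $F$, bound $\|x\|_L^{K'}$ by a constant on the small ball around $A_0$ (using $A_0\neq 0$), and reduce to the divergent radial integral $\int_0^\epsilon r^{n^2-1-K}\,dr$ for $K\geq n^2$. The only cosmetic difference is that the paper re-derives the numerator bound directly from the adjugate/determinant formulas rather than citing Lemma~\ref{lemma_3.6} as a black box.
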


\begin{proof}
Suppose the measure of set $M$ is $m(M)$. We ignore all singular matrices in the set $M$ because the measure of the singular matrix set is 0 and denote it as $S_M$. From the definition of expectation over Lebesgue measurement, 
\begin{equation}
    \mathbb{E}_M(\frac{\|\text{Inv}(x) - F(x)\|^K_L}{\|x\|^{K'}_L}) = \int_{M\setminus S_M} \frac{\|\text{Inv}(x) - F(x)\|^K_L}{\|x\|^{K'}_L}\frac{1}{m(M)}dm
\end{equation}

Because $B(\vec{a}, c)$ is contained in the set, which is a ball area and $\vec{a}$ in a data point, $c$ is a sufficiently large number, there should be a singular matrix $A_0$ with rank $n-1$ with the ball $B(A_0, \epsilon_0)\setminus S_M$ contained in the dataset.

Obviously, $\mathbb{E}_M(\frac{\|\text{Inv}(x) - F(x)\|^K_L}{\|x\|^{K'}_L}) \geq \int_{B(A_0, \epsilon_0)\setminus S_M} \frac{\|\text{Inv}(x) - F(x)\|^K_L}{\|x\|^{K'}_L}\frac{1}{m(M)}dm$, we only consider the integral over set $B(A_0, \epsilon_0)$.

Because $F(x)$ is polynomial Lipschitz in $B(A_0, \epsilon_0)$, there must be a maximum value of $\|F(x)\|_L$, denote it as $C_F$.

In Lemma \ref{lemma_3.6}, we find that, for sufficiently small $\epsilon$, in the ball $B(A_0, \epsilon)$, $\|Adj(A) - Adj(A_0)\|_L< \|Adj(A_0)\|_L/2$ and  $|det(A)| = |det(A)-det(A_0)| < C_{A_0}^{det}\|A-A_0\|_{L_{\infty}}$. Because $\|A_0\|_L>0$ (otherwise $A_0$ is a zero matrix, rank is 0 not $n-1$), we can find a $\epsilon$ small enough to make all matrices in $B(A_0, \epsilon)$ satisfy $\|x\|_L \leq 2\|A_0\|_L$. Hence, we can set $\epsilon$ small enough to satisfy $\epsilon < \epsilon_0$ and $\|x-A_0\|_L<\frac{\|Adj(A_0)\|_L}{4C_{A_0}^{det}C_F}$, together with $\|x\|_L \leq 2\|A_0\|_L$. Then we have
\begin{equation}
\small
\begin{split}
&\|\frac{1}{det(A)}Adj(A) - F(x)\|_L \\
&\geq \frac{1}{det(A)}\|Adj(A)\|_L - \|F(x)\|_L \\
&\geq \frac{1}{det(A)}( \|Adj(A_0)\|_L-\|Adj(A)-Adj(A_0)\|_L) - \|F(x)\|_L \\
&> \frac{1}{2det(A)}\|Adj(A_0)\|_L - C_F \\
&> \frac{1}{2C_{A_0}^{det}\|A-A_0\|_{L}}\|Adj(A_0)\|_L - C_F \\
&= \frac{1}{4C_{A_0}^{det}\|A-A_0\|_{L}}\|Adj(A_0)\|_L>0
\end{split}
\end{equation}

Hence, 
\begin{equation}
\begin{split}
    &\int_{B(A_0, \epsilon) \setminus S_M} \frac{\|\text{Inv}(x) - F(x)\|^K_L}{\|x\|^{K'}_L}\frac{1}{m(M)}dm \\
    &= \int_{B(A_0, \epsilon) \setminus S_M} \|\frac{1}{det(A)}Adj(A) - F(x)\|^K_L \frac{1}{\|x\|^{K'}_L}\frac{1}{m(M)}dm \\
    &> \int_{B(A_0, \epsilon) \setminus S_M} (\frac{1}{4C_{A_0}^{det}\|A-A_0\|_{L}}\|Adj(A_0)\|_L)^K \frac{1}{2^{K'}\|A_0\|^{K'}_L}\frac{1}{m(M)}dm\\
    &= \frac{1}{m(M)}(\frac{\|Adj(A_0)\|_L}{4C_{A_0}^{det}})^K \frac{1}{2^{K'}\|A_0\|^{K'}_L} \int_{B(A_0, \epsilon) \setminus S_M} (\frac{1}{\|A-A_0\|_{L}})^K dm\\
    &\geq \frac{Const}{m(M)}(\frac{\|Adj(A_0)\|_L}{4C_{A_0}^{det}})^K \frac{1}{2^{K'}\|A_0\|^{K'}_L} \int_{0}^{\epsilon} r^{n^2-1-K}dr, \\
\end{split}
\end{equation}

where $Const$ represents a real number calculated from $n,C_{A_0},m(M)$.

Because $K\geq n^2$, obviously the last formula larger than any real number when $\epsilon \rightarrow \infty$.

Hence, we have

\begin{equation}
\begin{split}
    &\mathbb{E}_M(\frac{\|\text{Inv}(x) - F(x)\|^K_L}{\|x\|^{K'}_L}) \\
    &= \int_{M\setminus S_M} \frac{\|\text{Inv}(x) - F(x)\|^K_L}{\|x\|^{K'}_L}\frac{1}{m(M)}dm \\
    &\geq \int_{B(A_0, \epsilon) \setminus S_M} \frac{\|\text{Inv}(x) - F(x)\|^K_L}{\|x\|^{K'}_L}\frac{1}{m(M)}dm\\
    &>E.
\end{split}
\end{equation}
for any real number E.

As a result, $\mathbb{E}_M(\frac{\|\text{Inv}(x) - F(x)\|^K_L}{\|x\|^{K'}_L}) = +\infty$.

\end{proof}

\section{Hyperparameters in Experiment \ref{experiment_result}} \label{hyperparameter_appendix}

\begin{itemize}
    \item Adam optimizer: learning rate 5e-5, weight decay coefficient 1e-7.
    \item Warm restart: CosineAnnealingWarmRestarts function in PyTorch, $T\_0=3$,$T\_mult=2$,$eta\_min=1e-6$.
    \item Loss function: MSE Loss.
    \item $2\times 2 (1-st)$ dataset, 2 FC with ReLU: 
    \begin{itemize}
        \item First layer input features 4, output features 32. Second layer input features 32, output features 4.
        \item Batch size 128, training data contains 1,000,000 matrices, train 20 epochs. The test set contains 10,000 matrices.
        \item Trained for less than 1 hour.
    \end{itemize} 
    \item $2\times 2 (1-st)$ dataset, 3 FC with ReLU: 
    \begin{itemize}
        \item First layer input features 4, output features 32. The second layer input features 32, output features 32. The third layer input features 32, output features 4. 
        \item Batch size 128, training data contains 1,000,000 matrices, train 20 epochs. The test set contains 10,000 matrices.
        \item Trained for less than 1 hour.
    \end{itemize} 
    \item $2\times 2 (2-nd)$ dataset, 2 FC with ReLU: 
    \begin{itemize}
        \item First layer input features 4, output features 32. The second layer input features 32, output features 4.
        \item Batch size 128, training data contains 1,000,000 matrices, train 20 epochs. The test set contains 10,000 matrices.
        \item Trained for less than 1 hour.
    \end{itemize} 
    \item $2\times 2 (2-nd)$ dataset, 3 FC with ReLU: 
    \begin{itemize}
        \item First layer input features 4, output features 32. The second layer input features 32, output features 32. The third layer input features 32, output features 4. 
        \item Batch size 128, training data contains 1,000,000 matrices, train 20 epochs. The test set contains 10,000 matrices.
        \item Trained for less than 1 hour.
    \end{itemize} 
    \item $3\times 3 $ dataset, 2 FC with ReLU: 
    \begin{itemize}
        \item First layer input features 9, output features 72. The second layer input features 72, output features 9.
        \item Batch size 128, training data contains 100,000 matrices, train 200,000 steps. The test set contains 10,000 matrices.
        \item Trained for less than 1 hour.
    \end{itemize} 
    \item $3\times 3$ dataset, 3 FC with ReLU: 
    \begin{itemize}
        \item First layer input features 9, output features 72. The second layer input features 72, output features 72. The third layer input features 72, output features 9. 
        \item Batch size 128, training data contains 100,000 matrices, train 200,000 steps. The test set contains 10,000 matrices.
        \item Trained for less than 1 hour.
    \end{itemize} 
    \item $16\times 16 $ dataset, 2 FC with ReLU: 
    \begin{itemize}
        \item First layer input features 256, output features 2048. The second layer input features 2048, output features 256.
        \item Batch size 128, training data are generated during training, train 200,000 steps.
        \item Trained for less than 1 hour.
    \end{itemize} 
    \item $16\times 16$ dataset, 3 FC with ReLU: 
    \begin{itemize}
        \item First layer input features 256, output features 2048. The second layer input features 2048, output features 2048. The third layer input features 2048, output features 256. 
        \item Batch size 128, training data are generated during training, train 200,000 steps.
        \item Trained for less than 1 hour.
    \end{itemize} 
\end{itemize}

\section{Hyperparameters in Experiment \ref{experiment_theoretical}}\label{hyperparameter_appendix_theoretical}

\begin{itemize}
    \item Adam optimizer: learning rate 5e-5, weight decay coefficient 1e-7.
    \item Warm restart: CosineAnnealingWarmRestarts function in PyTorch, $T\_0=3$,$T\_mult=2$,$eta\_min=1e-6$.
    \item Loss function: MSE Loss.
    \item First layer input features 4, output features 8. The second layer is input feature 8, output features 4.
    \item Batch size 128, training data contains 1,000,000 matrices, train 20 epochs.
    \item Trained for less than 1 hour.
\end{itemize}

\section{Inference time}\label{inference_time_section}

In this section, we list the inference time for different MLP models and different datasets in Table \ref{inference_time_table}.

We try two MLP models: MLP model with 2 fully-connect layers and MLP model with 3 fully-connect layers, three datasets: $2\times 2$ matrix, $3\times 3$ matrix, $16\times 16$ matrix, and list the inference time for four different inference methods: inference by GPU with batchsize 1, inference by GPU with batchsize 100, inference by GPU with batchsize 10000, inference by CPU. Then, we compare these methods with the time of exact computing matrix inversion.

\begin{table}[t]
  \centering
  \small
  \caption{The inference time for 10000 samples. The experiments were run on a 3080 Laptop GPU and Intel i7-10870H CPU. bs represents the batch size}
  \label{inference_time_table}
  \vskip.05in
  \renewcommand{\arraystretch}{1.3}
  \begin{tabular}{c|cccc|c}
    \toprule
    Model &  bs:1 (GPU) & bs:100 (GPU) & bs:10000 (GPU) & bs:1 (CPU) & exact computing (CPU)\\
    \midrule
    2-FC MLP on $2\times 2$ matrix & 4.491s & 0.214s & 0.125s & 1.703s & 0.075s\\
    3-FC MLP on $2\times 2$ matrix & 8.177s & 0.356s & 0.162s & 2.032s & \\
    \midrule
    2-FC MLP on $3\times 3$ matrix & 5.501s & 0.257s & 0.162s & 1.700s & 0.078s\\
    3-FC MLP on $3\times 3$ matrix & 6.718s & 0.280s & 0.159s & 2.057s & \\
    \midrule
    2-FC MLP on $16\times 16$ matrix & 4.709s & 0.623s & 0.467s & 5.232s & 1.196s\\
    3-FC MLP on $16\times 16$ matrix & 6.623s & 0.878s & 0.459s & 25.887s & \\
    \bottomrule
  \end{tabular}
\end{table}

\section{Full analysis of model trained in Experiment \ref{experiment_theoretical}}\label{appendix_full_analysis}

In this section, all values are rounded to 5 significant figures. All the double-point precision values, together with codes/pre-trained models can be downloaded from our codebase in the supplement file. 

\subsection{Parameters of the trained model}
In the trained-well model, there are two fully connected layers. Hence, the formula of the neural network can be written as $y=RELU(xW_1^T+b_1)W_2^T+b_2$. We list all the values of the parameters of the trained model below:
\begin{itemize}
    \item The weight matrix $W_1$ of 1-st fully connected layer: \\
         $[ 0, 0, 0, 0]$\\
         $[ 9.5628e-02,  2.9369e-01,  5.3154e-02, -4.4647e-01]$\\
         $[ 0, 0,  0,  0]$\\
         $[ 1.5360e+00, -1.2078e+00, -7.5969e-01,  6.3958e-01]$\\
         $[-3.6153e-01, -4.7281e-02,  4.6443e-01, -3.2432e-01]$\\
         $[ 3.5525e-01, -1.8960e-01, -6.1029e-01,  1.3732e-01]$\\
         $[-1.2802e+00,  1.0202e+00,  6.6826e-01, -5.6571e-01]$\\
         $[-6.5787e-01,  4.8745e-01,  2.9334e-01, -2.2843e-01]$\\
    \item The bias $b_1$ of 1-st fully connected layer: \\
       $[0,  7.9882e-01, 0, -2.4201e-01,  1.0407e+00,
          1.2244e+00,  1.9913e-01,  9.9282e-02]$
    \item The weight matrix $W_2$ of 2-nd fully connected layer: \\
         $[0, -7.5081e-02,  0, -1.1650e+00,  5.8385e-01,
          -5.0147e-01,  1.1485e+00,  6.6387e-01]$\\
         $[ 0, -8.2578e-01, 0,  9.2669e-01,  1.7431e-01,
           4.8691e-01, -1.0013e+00, -3.4407e-01]$\\
         $[0, -4.9190e-01,  0,  5.3894e-01, -8.0408e-01,
           1.0796e+00, -6.3829e-01, -1.4140e-01]$\\
         $[ 0,  1.0662e+00,  0, -4.5637e-01,  3.7282e-01,
          -6.5138e-01,  5.4483e-01,  1.0231e-01]$\\
    \item The bias $b_2$ of 2-nd fully connected layer: \\
        $[-0.23847,  0.12373,  0.055120, -0.56574]$
\end{itemize}

We define 
\begin{equation}
\begin{split}
&h_1(a,b,c,d) = 0.095628 *a+ 0.29369*b+ 0.053154*c-0.44647*d + 0.79882\\
&h_3(a,b,c,d) =  1.5360 *a -1.2078*b -0.75969*c+  0.63958*d-0.24201\\
&h_4(a,b,c,d) = -0.36153*a -0.047281*b+  0.46443*c -0.32432*d+1.0407\\
&h_5(a,b,c,d) =  0.35525e *a -0.18960*b -0.61029*c+  0.13732*d+1.2244\\
&h_6(a,b,c,d) = -1.2802 *a+  1.0202*b+  0.66826*c -0.56571*d+0.19913\\
&h_7(a,b,c,d) = -0.65787e *a+  0.48745*b+  0.29334*c -0.22843*d+0.099282\\
\end{split}
\end{equation}
  
\subsection{Analysis of computing $a_{ij}$ in pre-trained neural network}
Consider the element $a_{ij}$, in this network, it can be represented as $a_{ij}=\sum_{k=0}^7 w_{k,ij}\text{ReLU}(h_k(a,b,c,d))$. For example, $a_{11}$ has a form of

\begin{equation}\label{a_11_neural_network}
\begin{split}
    a_{11} =& -(7.5081e-02)*\text{ReLU}(h_1)\\
&-(1.1650e+00)*\text{ReLU}(h_3)\\
&+(5.8385e-01)*\text{ReLU}(h_4)\\
&-(5.0147e-01)*\text{ReLU}(h_5)\\
&+(1.1485e+00)*\text{ReLU}(h_6)\\
&+(6.6387e-01)*\text{ReLU}(h_7)\\
&-0.23847\\
\end{split}
\end{equation}

We use two methods, random sample by experiments and linear programming, to show that the pre-trained neural network has learned the linear approximation of matrix inversion.
\subsubsection{Experiments: Random sample}
Recall the dataset area is $\prod_{i=1,j=1}^{2}[A_{0,i,j} - 0.01,A_{0,i,j} + 0.01] \in \mathbb{R}^{2\times 2}$, and $A_0=\begin{pmatrix}
2 & 2 \\
2 & 3
\end{pmatrix}$.

We randomly sampled 1M data points in the dataset area and found $55.7$ percent of data located in the area $\{h_i>0|i \in \{1,4,5,6,7\}\}\cap \{h_i<0|i \in \{3\}\}$, and $41.7$ percent of data located in the area $\{h_i>0|i \in \{1,3,4,5\}\}\cap \{h_i<0|i \in \{6,7\}\}$. These two cases include most of the data ($97.4\%$) in the area.

In the first case, we eliminate the ReLU function in formula \ref{a_11_neural_network} and get
\begin{equation}
\begin{split}
     a_{11} &= -2.3034*a + 1.5408*b + 1.5354*c - 1.0260*d - 0.0102 \\
     a_{12} &=  1.5392*a - 1.5324*b - 1.0302*c + 1.0241*d + 0.0081 \\
     a_{21} &=  1.5373*a - 1.0313*b - 1.5265*c + 1.0220*d + 0.0060\\
     a_{22} &= -1.0290*a + 1.0248*b + 1.0215*c - 1.0180*d - 0.0049\\
\end{split}
\end{equation}

In the second case, we eliminate the ReLU too:

\begin{equation}
\begin{split}
    a_{11} &= -2.1860*a + 1.4526*b + 1.4583*c - 0.9698*d - 0.0229\\
    a_{12} &=  1.4544*a - 1.4624*b - 0.9641*c + 0.9717*d + 0.0174\\
    a_{21} &=  1.4550*a - 0.9621*b - 1.4679*c + 0.9733*d + 0.0167\\
    a_{22} &= -0.9652*a + 0.9702*b + 0.9741*c - 0.9783*d - 0.0131\\
\end{split}
\end{equation}

Compare with the linear approximation 
\begin{equation}
    \begin{split}
        &a_{11}\approx - 2.25a+ 1.5b+ 1.5c- d\\
        &a_{12}\approx 1.5a- 1.5b-c+d\\
        &a_{21}\approx 1.5a- b-1.5c+d\\
        &a_{22}\approx  - a + b+ c-  d\\
    \end{split}
\end{equation}
, we can find that the distance between each coefficient in the neural network and each coefficient in the linear approximation is smaller than $0.06$.

\subsubsection{Linear Programming}

In this section, we consider that, in each of the area $\cap_k \{(a,b,c,d)|h_k(a,b,c,d)>0\ or\ <0\}$, how large is the distance between the output of the neural network and the linear approximation.

For $a_{ij}$, this problem can be stated as a linear programming problem:
\begin{equation}
    \begin{split}
        &Found\ (a,b,c,d)\ maximize\ |\sum_{k=0}^7 w_{k,ij}\text{ReLU}(h_k(a,b,c,d)) + bias_{ij} - Linear_{ij}(a,b,c,d)|\\
        &Subject\ to\ (a,b,c,d)\in [-c,c]^4\\
        &And\ \{h_k>0(or h_k<0)\}
    \end{split}
\end{equation},
where the $Linear_{ij}$ function represents
\begin{equation}
    \begin{split}
        &Linear_{11}(a,b,c,d)= - 2.25a+ 1.5b+ 1.5c- d\\
        &Linear_{12}(a,b,c,d)= 1.5a- 1.5b-c+d\\
        &Linear_{21}(a,b,c,d)= 1.5a- b-1.5c+d\\
        &Linear_{22}(a,b,c,d)=  - a + b+ c-  d\\
    \end{split}
\end{equation}

For each area, we test linear programming on it. However, many areas, like all the $h_k<0$, has no overlap between $(a,b,c,d)\in [-c,c]^4$, which is a null-set. As a result, we only test the area that contains data. The results for each $a_{ij}$ are shown in Table \ref{a_11_maximum_abs_linear}, \ref{a_12_maximum_abs_linear}, \ref{a_21_maximum_abs_linear}, \ref{a_22_maximum_abs_linear} separately:
\begin{table}[ht]
  \centering
  \small
  \caption{Maximum absolute value between the linear approximation of $a_{11}$ and pre-trained neural network in sets which are not null-set.}
  \label{a_11_maximum_abs_linear}
  \vskip.05in
      \renewcommand{\arraystretch}{1.3}
  \begin{tabular}{c|c|c}
    \toprule
    Data proportion & Area & Maximum absolute value \\
    \midrule
    $<0.01\%$ &$h_1>0, h_3>0, h_4>0, h_5>0, h_6>0, h_7>0$ & $0.00024328$ \\
    $0.26\%$ &$h_1>0, h_3>0, h_4>0, h_5>0, h_6>0, h_7<0$ & $0.00046950$ \\
    $0.88\%$ &$h_1>0, h_3>0, h_4>0, h_5>0, h_6<0, h_7>0$ & $0.00054643$ \\
    $41.72\%$ &$h_1>0, h_3>0, h_4>0, h_5>0, h_6<0, h_7<0$ & $0.0015488$\\
    $0.06\%$ &$h_1>0, h_3>0, h_4<0, h_5>0, h_6<0, h_7<0$ & $0.0019683$ \\
    $55.67\%$ &$h_1>0, h_3<0, h_4>0, h_5>0, h_6>0, h_7>0$ & $0.0014152$ \\
    $0.92\%$ &$h_1>0, h_3<0, h_4>0, h_5>0, h_6>0, h_7<0$ & $0.00046950$ \\
    $0.44\%$ &$h_1>0, h_3<0, h_4>0, h_5>0, h_6<0, h_7>0$ & $0.00054643$ \\
    $0.02\%$ &$h_1>0, h_3<0, h_4>0, h_5>0, h_6<0, h_7<0$ & $0.00027471$ \\
    $<0.01\%$ &$h_1>0, h_3<0, h_4>0, h_5<0, h_6>0, h_7>0$ & $0.0014152$ \\
    $<0.01\%$ &$h_1<0, h_3>0, h_4>0, h_5>0, h_6<0, h_7>0$ & $0.00037102$ \\
    $0.02\%$ &$h_1<0, h_3>0, h_4>0, h_5>0, h_6<0, h_7<0$ & $0.00092748$ \\
    $<0.01\%$ &$h_1<0, h_3<0, h_4>0, h_5>0, h_6>0, h_7>0$ & $0.00020061$ \\
    $<0.01\%$ &$h_1<0, h_3<0, h_4>0, h_5>0, h_6<0, h_7>0$ & $0.00037102$\\
    \bottomrule
  \end{tabular}
\end{table}

\begin{table}[ht]
  \centering
  \small
  \caption{Maximum absolute value between the linear approximation of $a_{12}$ and pre-trained neural network in sets which are not null-set.}
  \label{a_12_maximum_abs_linear}
  \vskip.05in
      \renewcommand{\arraystretch}{1.3}
  \begin{tabular}{c|c|c}
    \toprule
    Data proportion & Area & Maximum absolute value \\
    \midrule
    $<0.01\%$ &$h_1>0, h_3>0, h_4>0, h_5>0, h_6>0, h_7>0$ & $0.00018972$ \\
    $0.26\%$ &$h_1>0, h_3>0, h_4>0, h_5>0, h_6>0, h_7<0$ & $0.00031370$ \\
    $0.88\%$ &$h_1>0, h_3>0, h_4>0, h_5>0, h_6<0, h_7>0$ & $0.00041427$ \\
    $41.72\%$ &$h_1>0, h_3>0, h_4>0, h_5>0, h_6<0, h_7<0$ & $0.0012519$\\
    $0.06\%$ &$h_1>0, h_3>0, h_4<0, h_5>0, h_6<0, h_7<0$ & $0.0012519$ \\
    $55.67\%$ &$h_1>0, h_3<0, h_4>0, h_5>0, h_6>0, h_7>0$ & $0.0011440$ \\
    $0.92\%$ &$h_1>0, h_3<0, h_4>0, h_5>0, h_6>0, h_7<0$ & $0.00031370$ \\
    $0.44\%$ &$h_1>0, h_3<0, h_4>0, h_5>0, h_6<0, h_7>0$ & $0.00041427$ \\
    $0.02\%$ &$h_1>0, h_3<0, h_4>0, h_5>0, h_6<0, h_7<0$ & $0.00021671$ \\
    $<0.01\%$ &$h_1>0, h_3<0, h_4>0, h_5<0, h_6>0, h_7>0$ & $0.0011440$ \\
    $<0.01\%$ &$h_1<0, h_3>0, h_4>0, h_5>0, h_6<0, h_7>0$ & $0.00049615$ \\
    $0.02\%$ &$h_1<0, h_3>0, h_4>0, h_5>0, h_6<0, h_7<0$ & $0.0011328$ \\
    $<0.01\%$ &$h_1<0, h_3<0, h_4>0, h_5>0, h_6>0, h_7>0$ & $0.00022353$ \\
    $<0.01\%$ &$h_1<0, h_3<0, h_4>0, h_5>0, h_6<0, h_7>0$ & $0.00049615$\\
    \bottomrule
  \end{tabular}
\end{table}

\begin{table}[ht]
  \centering
  \small
  \caption{Maximum absolute value between the linear approximation of $a_{21}$ and pre-trained neural network in sets which are not null-set.}
  \label{a_21_maximum_abs_linear}
  \vskip.05in
      \renewcommand{\arraystretch}{1.3}
  \begin{tabular}{c|c|c}
    \toprule
    Data propotion & Area & Maximum absolute value \\
    \midrule
    $<0.01\%$ &$h_1>0, h_3>0, h_4>0, h_5>0, h_6>0, h_7>0$ & $0.00014555$ \\
    $0.26\%$ &$h_1>0, h_3>0, h_4>0, h_5>0, h_6>0, h_7<0$ & $0.00024849$ \\
    $0.88\%$ &$h_1>0, h_3>0, h_4>0, h_5>0, h_6<0, h_7>0$ & $0.00020619$ \\
    $41.72\%$ &$h_1>0, h_3>0, h_4>0, h_5>0, h_6<0, h_7<0$ & $0.0012065$\\
    $0.06\%$ &$h_1>0, h_3>0, h_4<0, h_5>0, h_6<0, h_7<0$ & $ 0.0017523$ \\
    $55.67\%$ &$h_1>0, h_3<0, h_4>0, h_5>0, h_6>0, h_7>0$ & $0.0010813$ \\
    $0.92\%$ &$h_1>0, h_3<0, h_4>0, h_5>0, h_6>0, h_7<0$ & $0.00016381$ \\
    $0.44\%$ &$h_1>0, h_3<0, h_4>0, h_5>0, h_6<0, h_7>0$ & $0.00024714$ \\
    $0.02\%$ &$h_1>0, h_3<0, h_4>0, h_5>0, h_6<0, h_7<0$ & $0.00016381$ \\
    $<0.01\%$ &$h_1>0, h_3<0, h_4>0, h_5<0, h_6>0, h_7>0$ & $0.0010813$ \\
    $<0.01\%$ &$h_1<0, h_3>0, h_4>0, h_5>0, h_6<0, h_7>0$ & $0.00029226$ \\
    $0.02\%$ &$h_1<0, h_3>0, h_4>0, h_5>0, h_6<0, h_7<0$ & $0.00079305$ \\
    $<0.01\%$ &$h_1<0, h_3<0, h_4>0, h_5>0, h_6>0, h_7>0$ & $0.00019002$ \\
    $<0.01\%$ &$h_1<0, h_3<0, h_4>0, h_5>0, h_6<0, h_7>0$ & $0.00025328$\\
    \bottomrule
  \end{tabular}
\end{table}

\begin{table}[ht]
  \centering
  \small
  \caption{Maximum absolute value between the linear approximation of $a_{22}$ and pre-trained neural network in sets which are not null-set.}
  \label{a_22_maximum_abs_linear}
  \vskip.05in
      \renewcommand{\arraystretch}{1.3}
  \begin{tabular}{c|c|c}
    \toprule
    Data proportion & Area & Maximum absolute value \\
    \midrule
    $<0.01\%$ &$h_1>0, h_3>0, h_4>0, h_5>0, h_6>0, h_7>0$ & $0.00011774$ \\
    $0.26\%$ &$h_1>0, h_3>0, h_4>0, h_5>0, h_6>0, h_7<0$ & $0.00021507$ \\
    $0.88\%$ &$h_1>0, h_3>0, h_4>0, h_5>0, h_6<0, h_7>0$ & $0.00017554$ \\
    $41.72\%$ &$h_1>0, h_3>0, h_4>0, h_5>0, h_6<0, h_7<0$ & $0.00095610$\\
    $0.06\%$ &$h_1>0, h_3>0, h_4<0, h_5>0, h_6<0, h_7<0$ & $ 0.0012230$ \\
    $55.67\%$ &$h_1>0, h_3<0, h_4>0, h_5>0, h_6>0, h_7>0$ & $0.00085934$ \\
    $0.92\%$ &$h_1>0, h_3<0, h_4>0, h_5>0, h_6>0, h_7<0$ & $0.00013245$ \\
    $0.44\%$ &$h_1>0, h_3<0, h_4>0, h_5>0, h_6<0, h_7>0$ & $0.00021283$ \\
    $0.02\%$ &$h_1>0, h_3<0, h_4>0, h_5>0, h_6<0, h_7<0$ & $0.00013245$ \\
    $<0.01\%$ &$h_1>0, h_3<0, h_4>0, h_5<0, h_6>0, h_7>0$ & $0.00085934$ \\
    $<0.01\%$ &$h_1<0, h_3>0, h_4>0, h_5>0, h_6<0, h_7>0$ & $0.00054635$ \\
    $0.02\%$ &$h_1<0, h_3>0, h_4>0, h_5>0, h_6<0, h_7<0$ & $0.0012563$ \\
    $<0.01\%$ &$h_1<0, h_3<0, h_4>0, h_5>0, h_6>0, h_7>0$ & $0.00016273$ \\
    $<0.01\%$ &$h_1<0, h_3<0, h_4>0, h_5>0, h_6<0, h_7>0$ & $0.00042429$\\
    \bottomrule
  \end{tabular}
\end{table}

\end{document}